\documentclass[sigconf,nonacm]{acmart}
\AtBeginDocument{%
  }

\usepackage{tabularx}
\usepackage{ragged2e}
\newcolumntype{L}{>{\RaggedRight\arraybackslash}X}
\usepackage{tikz}
\usetikzlibrary{bayesnet} 
\usepackage{enumitem}

\DeclareMathOperator{\tr}{tr}
\DeclareMathOperator{\cov}{Cov}
\DeclareMathOperator{\var}{Var}
\DeclareMathOperator{\diag}{diag}
\newcommand{\norm}[1]{\left\lVert#1\right\rVert}

\begin{document}

%%
%% The "title" command has an optional parameter,
%% allowing the author to define a "short title" to be used in page headers.
\title[The Trade-off Between Covariate Dependence and Latent Structure in Representation Learning]{
The Trade-off Between Covariate Dependence \\and Latent Structure in Representation Learning}

%%
%% The "author" command and its associated commands are used to define
%% the authors and their affiliations.
%% Of note is the shared affiliation of the first two authors, and the
%% "authornote" and "authornotemark" commands
%% used to denote shared contribution to the research.
\author{Małgorzata Łazęcka}
\email{m.lazecka@uw.edu.pl}
\orcid{0000-0003-0975-4274}
\affiliation{%
  \institution{Faculty of Mathematics, Informatics, and Mechanics, University of Warsaw}
  \city{Warsaw}
  \country{Poland}
}

\author{Ewa Szczurek}
\email{em.szczurek@uw.edu.pl}
\orcid{0000-0002-1320-6695}
\affiliation{%
  \institution{Institute of AI for Health, Helmholtz Munich}
  \city{Munich}
  \country{Germany}
}
\affiliation{%
  \institution{Faculty of Mathematics, Informatics, and Mechanics, University of Warsaw}
  \city{Warsaw}
  \country{Poland}
}

%%
%% By default, the full list of authors will be used in the page
%% headers. Often, this list is too long, and will overlap
%% other information printed in the page headers. This command allows
%% the author to define a more concise list
%% of authors' names for this purpose.
\renewcommand{\shortauthors}{Łazęcka M., Szczurek E.}

%%
%% The abstract is a short summary of the work to be presented in the
%% article.
\begin{abstract}
Disentangled representation learning seeks latent representations whose indicidual dimensions each align with a distinct covariate. Unsupervised approaches typically target latent dimension independence, yet this gives no guarantee that the resulting dimensions align with semantically meaningful covariates. Supervised approaches structure the latent space using observed covariates, but under correlated covariates they cannot simultaneously control one-to-one latent-covariate alignment and latent independence. We introduce a unified, supervised framework that couples latent dimension-covariate dependence with constraints on the latent structure. Within this framework, we show 
an inherent trade-off, where enforcing latent independence or exclusive one-to-one latent-covariate dependence comes at a provable cost in latent-covariate alignment. 
We prove that the resulting disentanglement regimes are ordered by the strength of that alignment. Each regime admits a closed-form transformation of the latent space. We apply these transformations post-hoc to realign the representations of pretrained models such as CLIP, DINOv2, and ViT, and we fold them into the inference of informed factor analysis (iFA), a probabilistic model with covariate-informed factors. On simulated and real multi-omics data, we show that both post-hoc alignment and iFA enable controllability of structured latent representations.
\end{abstract}

%%
%% The code below is generated by the tool at http://dl.acm.org/ccs.cfm.
%% Please copy and paste the code instead of the example below.
%%
\begin{CCSXML}
<ccs2012>
   <concept>
       <concept_id>10010147.10010257.10010293.10010319</concept_id>
       <concept_desc>Computing methodologies~Learning latent representations</concept_desc>
       <concept_significance>500</concept_significance>
       </concept>
   <concept>
       <concept_id>10010147.10010257.10010293.10010300.10010305</concept_id>
       <concept_desc>Computing methodologies~Latent variable models</concept_desc>
       <concept_significance>300</concept_significance>
       </concept>
   <concept>
       <concept_id>10010147.10010257.10010293.10010309.10010311</concept_id>
       <concept_desc>Computing methodologies~Factor analysis</concept_desc>
       <concept_significance>300</concept_significance>
       </concept>
   <concept>
       <concept_id>10010147.10010257.10010258.10010259</concept_id>
       <concept_desc>Computing methodologies~Supervised learning</concept_desc>
       <concept_significance>500</concept_significance>
       </concept>
 </ccs2012>
\end{CCSXML}

\ccsdesc[500]{Computing methodologies~Learning latent representations}
\ccsdesc[500]{Computing methodologies~Supervised learning}
\ccsdesc[300]{Computing methodologies~Latent variable models}
\ccsdesc[300]{Computing methodologies~Factor analysis}

%%
%% Keywords. The author(s) should pick words that accurately describe
%% the work being presented. Separate the keywords with commas.
\keywords{disentanglement, supervised latent representation learning, factor analysis}
%% A "teaser" image appears between the author and affiliation
%% information and the body of the document, and typically spans the
%% page.
% \begin{teaserfigure}
%   \includegraphics[width=\textwidth]{figures/fig0.png}
%   \caption{Seattle Mariners at Spring Training, 2010.}
%   \Description{Enjoying the baseball game from the third-base
%   seats. Ichiro Suzuki preparing to bat.}
%   \label{fig:teaser}
% \end{teaserfigure}

\received{26 July 2027}
\received[revised]{X XXXX 2027}
\received[accepted]{X XXXX 2027}

%%
%% This command processes the author and affiliation and title
%% information and builds the first part of the formatted document.
\maketitle

$ $

\begin{figure}[h!]
    \centering
    \includegraphics[width=\linewidth]{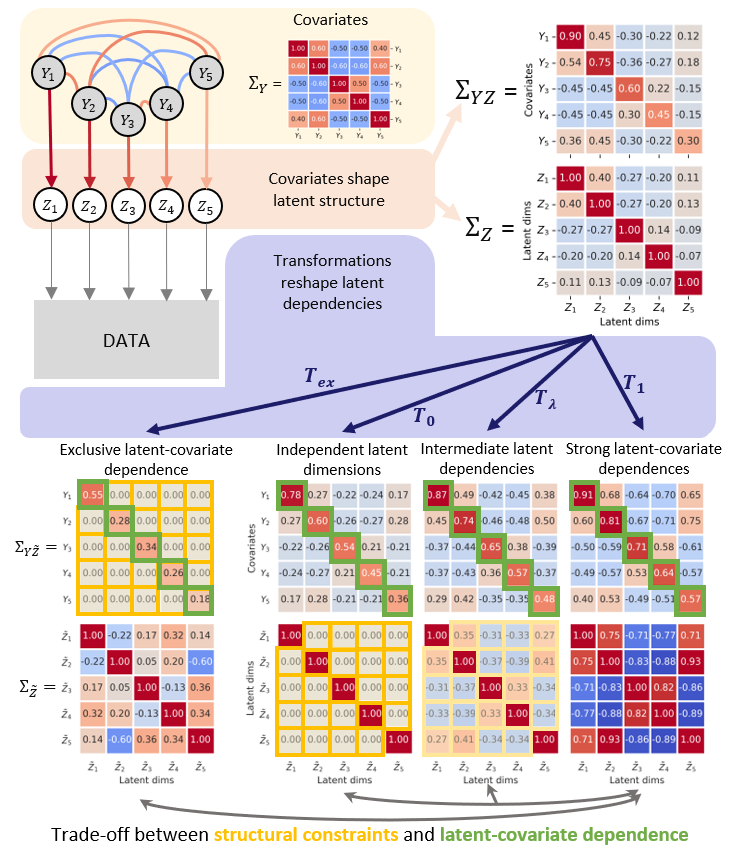}
    \caption{
    Overview of the proposed framework. Correlated covariates ($\Sigma_Y$) shape the latent structure ($\Sigma_Z$). The transformations impose different constraints on the latent space, trading off structural constraints (exclusive latent-covariate dependence, latent inter-independence) against the strength of pairwise latent-covariate dependence ($\Sigma_{Y,\tilde Z}$). 
    }
    \label{fig:fig1}
\end{figure}

\section{Introduction}

Latent representation learning 
maps a high-dimensional data space to a low-dimensional one, encoding each observation as a compact \emph{latent representation} capturing its essential characteristics. 
A long-standing objective of latent representation learning is to obtain representations whose individual dimensions align with distinct, semantically meaningful factors of variation \cite{bengio2013}. Such \emph{disentangled representations} promise interpretability, controllable generation and improved generalization, because they seek to ensure that manipulating a single latent dimension changes only one underlying factor of variation \cite{wang2024}. In many real-world applications, the factors of interest are not latent abstractions, but \emph{observed covariates} such as clinical variables, experimental conditions, or category labels, and the practical aim is a representation in which each latent dimension is aligned one-to-one with its corresponding covariate \cite{higgins2017betavae}.

This assumption holds when the observed covariates are independent, but fails in the practically relevant regime of correlated covariates~\cite{pmlr-v139-trauble21a, dittadi2021on}, as exemplified by $\Sigma_Y$ in Fig.~\ref{fig:fig1}.
This correlation among covariates creates a tension between two objectives:  making
each latent dimension strongly aligned with its covariate, and imposing structural
constraints on the latent space, such as independence among the latent
dimensions or exclusive dependence (each dimension aligning only with its own
covariate). 
This \textbf{trade-off between latent-covariate
dependence and structural constraints on the latent space} has not been
systematically studied.

The principle of latent independence is common in fully unsupervised approaches, and the
way it is applied varies: classical linear methods such as PCA \cite{Pearson01111901, Hotelling1933-gl} and ICA~\cite{HYVARINEN2000411} assume and
recover independent sources of
variation, while modern nonlinear models encourage
independence e.g. through factorized latent representations~\cite{higgins2017betavae, pmlr-v80-kim18b, chen2018isolating}. Enforcing independence alone, however, does not in general
ensure that latent dimensions align with interpretable or meaningful
covariates, nor does it guarantee identifiability~\cite{47692}. Conditioning
latent variable models on observed covariates under additional assumptions restores identifiability up to
simple transformations~\cite{ivae}, and a broad family of nonlinear~\cite{cvae, Zhang2025, ismail2024concept, Hadad_2018_CVPR, 3327345.3327540} and linear~\cite{Bair, BARSHAN, sofa, spear, jafar, Zhou2023, Palzer2022-mx,Samorodnitsky2024}  
supervised representation learning methods uses observed covariates to structure
the latent space. Yet all existing
approaches leave open the problem of the \textbf{simultaneous control of the
one-to-one alignment of each latent dimension with its covariate and of
the structural constraints on the latent space, particularly in the general,
correlated-covariate setting.}

We close this gap with a supervised framework that characterizes the trade-off between latent dimension-covariate dependence and latent structural constraints, providing a principled approach to controlling latent representations. Our contributions are both theoretical and practical. First, (i) leveraging the non-identifiability of latent representations, we formulate optimization problems over invertible \textbf{latent space transformations} ($T$ in Fig.~\ref{fig:fig1}) whose solutions yield latent-covariate dependence across different disentanglement regimes: independent latent dimensions ($T_0$), tunable intermediate latent inter-dependencies ($T_{\lambda}$), unconstrained latent inter-dependencies ($T_{1}$), and a variant with exclusive one-to-one latent-covariate dependence ($T_{\textrm{ex}}$). Second, (ii) we prove that these regimes are \textbf{strictly ordered by latent-covariate alignment strength}, and that there exists a fundamental \textbf{trade-off between structural latent constraints and latent-covariate dependence}.
Finally, (iii) we demonstrate that the proposed transformations can be applied post hoc to \textbf{re-align pretrained representations} or integrated into a \textbf{probabilistic graphical model, informed Factor Analysis (iFA)}. Through experiments on simulated covariate-covariance scenarios and real multi-omics data, we show that both approaches enable controllability of structured  representations.
\section{Related works}

\paragraph{Independence-based disentanglement methods.}  Many disentanglement approaches assume or encourage independence among the dimensions of the representation, an idea rooted in classical latent variable methods such as principal component analysis (PCA) \cite{Pearson01111901, Hotelling1933-gl}, independent component analysis (ICA) \cite{HYVARINEN2000411}, factor analysis (FA) \cite{fa}, and extended to probabilistic latent variable models, including multi-view factor models, e.g. MOFA \cite{mofa} and others \cite{gfa, bass, muvi}. The most prominent deep generative examples are VAE-based methods regularizing the latent distribution toward independence, e.g.  $\beta$-VAE \cite{higgins2017betavae}, FactorVAE \cite{pmlr-v80-kim18b}, or $\beta$-TCVAE \cite{chen2018isolating}. Similar ideas have also been explored beyond VAEs, for example through concept whitening in convolutional neural networks \cite{Chen2020-zm}, InfoGAN \cite{infogan}, a generative GAN model which promotes independence using mutual information, or non-linear ICA extension \cite{nonica}. 

\paragraph{Supervision in disentanglement and latent representation models.} Assuming latent independence alone does not guarantee identifiability \citep{47692}. Consequently, many successful approaches incorporate weak-supervision, such as grouped observations \cite{Bouchacourt2018}, temporal structure \cite{timeica}, or paired observations in which some factors of variation are shared \cite{3524938.3525527}. A second line is supervised in the stronger sense that covariates are directly observed, e.g. used to condition the latent representation, as in CVAE~\cite{cvae},
iVAE~\cite{ivae}, or SE-VAE~\cite{Zhang2025}, among others \cite{ismail2024concept, Hadad_2018_CVPR, 3327345.3327540}. The Identifiable VAE (iVAE)~\cite{ivae} targets identifiability by conditioning a factorized exponential-family prior on the covariates, which identifies the latent representation up to a linear and component-wise transformation, which can be understood as a form of disentanglement.

\paragraph{Supervised linear latent variable models.} The use of covariates to guide representations has been also used in linear models. Early examples include supervised extensions of PCA (SPCA), such as by Bair et al. \cite{Bair}, which focus only on features that are individually associated with the covariates, and Barshan et al. \cite{BARSHAN}, which estimates a sequence of principal components that have maximal dependence with covariates. Other related methods include regression- and correlation-based approaches, such as partial least squares (PLS) \cite{Wold_1975} and canonical correlation analysis (CCA), as well as supervised variants of FA \cite{spear, jafar, Zhou2023, Palzer2022-mx,Samorodnitsky2024}. These methods leverage covariates to improve prediction or interpretability, but generally do not enforce a one-to-one latent-covariate alignment. To the best of our knowledge, the only exception is SOFA \cite{sofa}, which explicitly associates individual latent dimensions with specific covariates. 

\paragraph{Disentanglement under correlated covariates.} 
Disentangling the latent space under correlated covariates is important but
relatively underexplored \citep{pmlr-v139-trauble21a}. Empirical studies that
introduce correlated covariates \citep{dittadi2021on, pmlr-v139-trauble21a} show
that the problem differs fundamentally from the usual assumption of independent
covariates. Enforcing independence between latent dimensions under correlated
covariates degrades the model by e.g. preventing it from attaining the optimal
likelihood \citep{pmlr-v139-trauble21a} or discarding information relevant to the
covariates and thus harming predictive performance \citep{funke2022disentanglement}.
Exact independence is therefore the wrong objective, motivating relaxations. One
line of work requires only the latent space's support to factorize
\citep{roth2023disentanglement}, while another enforces independence conditionally
on the covariates \citep{funke2022disentanglement}. The very definitions and metrics of disentanglement must also be rethought to remain valid under dependence \citep{almudvar2026rethinking}.

\section{Proposed framework}

In this section, we introduce a unified framework that 
%exploits the flexibility of latent representations to describe the 
formally explores the trade-off between latent-covariate dependence and structural constraints on the latent space.

\subsection{Notation and problem setup}
Let $Z \in \mathbb{R}^P$ be a latent random vector with covariance matrix $\Sigma_Z \coloneq \cov(Z)$ and unit marginal variances $
\mathrm{Var}(Z_p) = 1$ for  $p = 1, \ldots, P$. Let $Y \in \mathbb{R}^P$ be a vector of observed covariates, with $\var(Y_p) = 1$ for  $p = 1, \ldots, P$, and define the cross-covariance matrix $\Sigma_{ZY} \coloneq \cov(Z, Y)$. 
%For convenience we assume that both are centered. $Z$ explains data $X$, where $X$. 
Our objective is a latent representation in which every dimension is paired with the covariate, so that we obtain strong alignment in  pairs $(Z_p,Y_p)$ for  $p = 1, \ldots, P$.

We assume that the latent representation $Z$ is non-identifiable, in the sense that for any invertible matrix $T \in \mathbb{R}^{P \times P}$, the linearly transformed vector $\tilde Z = TZ$ provides an equivalent representation of the data $X$.
We exploit this freedom to select a transformation $T$ that imposes a desirable structure on the latent space, %and thereby resolves representation ambiguity by selecting a canonical structured representation. 
thereby defining a canonical structured representation and resolving representation ambiguity. 
Here $X$ may be a single
vector $X \in \mathbb{R}^d$ or, in the multi-modal case, a collection of vectors
$\{X^{(m)} \in \mathbb{R}^{d^{(m)}}\}_{m=1}^{M}$ across $M$ modalities. 

% In particular, we consider three regimes: \textbf{(i) decorrelated/independent latent dimensions}, which induce independence under a joint Gaussian assumption (and uncorrelatedness without assuming gaussianity) among the latent dimensions while aligning each latent dimension with the corresponding covariate; \textbf{(ii) strong dependence with covariates}, which maximizes the correlation between each latent dimension and its corresponding covariate without imposing structural constraints on the latent space; and \textbf{(iii) moderate dependencie}, which interpolates between these two extremes. We
% additionally examine a fourth scenario, \textbf{(iv) exclusive latent-dimension-covariate
% dependence}.
In particular, we consider three disentanglement regimes: \textbf{(i) strong latent dependence
on covariates}, which maximizes the correlation between each latent
dimension and its covariate while leaving the latent
dependence structure otherwise unconstrained, \textbf{(ii)
decorrelated/independent latent dimensions}, which enforces
uncorrelatedness among the latent dimensions (independence under a joint
Gaussian assumption) while aligning each latent dimension with its covariate, and \textbf{(iii) intermediate latent inter-dependence}, which
weights the alignment objective against proximity to the decorrelated
solution and thereby interpolates between the two extremes. We
additionally examine a fourth scenario, \textbf{(iv) exclusive
latent-dimension-covariate dependence}, in which each latent dimension
is required to correlate only with its own covariate. 

%We consider the problem of framing (i)--(iv)  within a single unified setup, and and postulate a provable ordering of all four regimes by their covariate dependence.

%Individually, regimes (i) and (ii) reduce to classical linear constructions, as we show below. Our starting point, however, is different: 
We pose the problem as pairwise latent-covariate dependence
under structural constraints on the latent space,
%and the resulting extremal transformations coincide with familiar linear methods. The contribution lies in 
by framing (i)-(iv) within a single unified setup.
In particular, in following section we prove that regime (iii) arises as a continuous interpolation between
(i) and (ii), and an ordering ordering of all four regimes by their alignment strength.
%This framework reveals a fundamental trade-off between two common objectives in representation learning: maximizing dependence with observed covariates and imposing structural constraints on the latent representation, such as independence among latent dimensions.
%This characterizes how structural constraints commonly assumed in
%disentanglement limit another of its goals: the achievable
%strength of covariate-latent dependence.

\subsection{Optimization-based formulations and solutions}
\label{sec:T_optimization}
%In the following, we give an optimization-based formulation of
%regimes~(i)-(iv), derive their solutions in closed form, and rank them
%by the strength of the latent-covariate dependence they achieve. 
We begin with the independence regime~(ii), since its
whitening transformation fixes a canonical representation
$\overline{Z}$ on which all remaining solutions are expressed, then we
solve the unconstrained regime~(i), followed by the interpolation
(iii), and finally the exclusive-dependence regime~(iv).

\subsubsection{Independent latent dimensions}

We first consider the problem of finding a whitening transformation $T$ such that the transformed latent variables are uncorrelated and standardized
$\cov(TZ) = I$, or equivalently $T \Sigma_Z T' = I$.
This constraint alone does not determine $T$ uniquely: if $T$ satisfies the above condition, then so does $QT$ for any orthogonal matrix $Q$ \cite{Kessy2018}, so whitening is defined only up to an orthogonal
rotation. A canonical choice for $T$ is the symmetric whitening transformation $\Sigma_Z^{-1/2}$, the unique symmetric positive-definite matrix that whitens $Z$. The general solution has a form
\[
T = Q \Sigma_Z^{-1/2}, \quad Q' Q = I.
\]
To resolve the remaining ambiguity, we incorporate the covariates $Y$ and seek a transformation that, in addition to whitening, maximizes the alignment in latent dimension-covariate pairs $(\tilde{Z}_p, Y_p)$, which leads to 
\begin{equation}
\label{eq:optim_indep}
    \max_T \ \tr(T \Sigma_{ZY})
\quad \text{s.t.} \quad T \Sigma_Z T' = I.
\end{equation}
Substituting $T$ with $Q \Sigma_Z^{-1/2}$ reduces the problem to
\[
\max_{Q' Q = I} \ \tr\big(Q \Sigma_Z^{-1/2} \Sigma_{ZY}\big),
\]
which is the orthogonal Procrustes problem \cite{Schnemann1966} (Section \ref{sec:procrustes}). Let
\[
\Sigma_Z^{-1/2} \Sigma_{ZY} = U D V'
\]
be the singular value decomposition. Then the optimal solution is
\[
Q^* = VU', \quad T_{0}^* = Q^* \Sigma_Z^{-1/2}.
\]
In terms of $\Sigma_Z$ and $\Sigma_{ZY}$ (Section \ref{sec:sigma_formula}) $T_{0}^*$ equals
\begin{equation}
\label{eq:t0}
    T_0^* = \big(\Sigma_{ZY}'\Sigma_Z^{-1}\Sigma_{ZY}\big)^{-1/2}\,\Sigma_{ZY}'\Sigma_Z^{-1}.
\end{equation}
Note that \eqref{eq:t0} is well-defined whenever $\Sigma_Z$ is positive
definite.

In the following sections, we work with latent variables that have been whitened by the transformation $T_0^*$, and we denote the transformed $Z$ by $\overline{Z} \coloneq T_0^* Z$. This simplifies the algebra while preserving the generality of the results, as to recover the transform on the original $Z$ we simply compose the optimal transformations $T(\overline{Z})$ with $T_0^*$ (justification that this yields the same solution as solving the problem for a general $Z$ is given in Section~\ref{sec:identifiability}).

% For notational convenience, we introduce a whitened latent $\bar{Z} \coloneq T_0^* Z$, as it simplifies the derivation of the subsequent regimes. To recover the transformation on the original $Z$, we simply compose the optimal transformation $T(\bar{Z})$ with $T_0^*$: by non-identifiability of the latent (Section~\ref{sec:identifiability}), this composition solves the corresponding problem on $Z$ directly.

\subsubsection{Strong latent dependence on covariates}
We now consider a formulation that prioritizes alignment with $Y$ while relaxing the demand on the latent independence. In particular, we only enforce that the transformed latent variables have unit marginal variances, thus the problem is
\begin{equation}
\label{eq:optim_max}
    \max_T \ \tr(T \Sigma_{\overline{Z}Y})
\quad \text{s.t.} \quad \diag(T T') = (1, \ldots, 1)'.
\end{equation}
Let $T = [t_1, \ldots, t_P]'$, where $t_p \in \mathbb{R}^P$ are the rows of $T$. The constraint reduces to $\norm{t_p} = 1$ for all $p$, and the objective term decomposes as
\[
\tr(T \Sigma_{\overline{Z}Y}) = \sum_{p=1}^P t_p' (\Sigma_{\overline{Z}Y})_{\cdot p}.
\]
Denoting $a_p \coloneq (\Sigma_{\overline{Z}Y})_{\cdot p}$, the problem separates into $P$ independent subproblems:
\[
\max_{\|t_p\| = 1} \ t_p' a_p.
\]
By the Cauchy-Schwarz inequality,
\[
t_p' a_p \leq \|t_p\| \, \|a_p\|,
\]
with equality if and only if $t_p$ is collinear with $a_p$. Hence, the optimal solution is $
t_p^* =
\frac{a_p}{\norm{a_p}}$ for $\norm{a_p} \neq 0$, or equivalently,
\begin{equation}
\label{eq:t1}
    t_p^* = \frac{(\Sigma_{\overline{Z}Y})_{\cdot p}}{\|(\Sigma_{\overline{Z}Y})_{\cdot p}\|}.
\end{equation}
For general $Z$, the corresponding transformation is
$T_1^* =  T_1^*(\overline{Z})\, T_0^*.$ The connection to regression is discussed in Section~\ref{sec:notes_t1}.

\subsubsection{Intermediate scenarios}
We now interpolate between the two regimes. Let $\lambda \in [0,1]$ and consider
\begin{equation}
\label{eq:optim_lambda}
    \max_T \lambda \tr(T \Sigma_{\overline{Z}Y}) - \frac{1-\lambda}{2}\|T - I\|_F^2
\textrm{ s.t. } \diag(T T') = (1, \ldots, 1)',
\end{equation}
For $\lambda = 0$, $T^* = I$, while for $\lambda = 1$ we recover \eqref{eq:optim_max}.
Using the identity
\[
\|T - I\|_F^2 = \sum_{p=1}^P \|t_p - e_p\|^2,
\]
where $e_p$ is the $p$-th canonical basis vector, the problem decomposes into:
\[
\max_{\|t_p\| = 1} \lambda \ t_p' a_p - \frac{1-\lambda}{2}\|t_p - e_p\|^2,
\]
where $a_p := (\Sigma_{\overline{Z}Y})_{\cdot p}$.
Expanding the quadratic term and using $\|t_p\| = 1$, the objective is equivalent to (up to a constant independent of $t_p$)
\[
\max_{\|t_p\| = 1} \ \lambda\, t_p' a_p + (1-\lambda)\, t_p' e_p.
\]
By the Cauchy-Schwarz inequality, the optimum is attained when $t_p$ is collinear with $\lambda a_p + (1-\lambda)e_p$, which results in
\begin{equation}
\label{eq:tlambda}
t_{p}^* = \frac{\lambda (\Sigma_{\overline{Z}Y})_{\cdot p} + (1-\lambda) e_p}
{\|\lambda (\Sigma_{\overline{Z}Y})_{\cdot p} + (1-\lambda) e_p\|}.
\end{equation}
For general $Z$, we obtain $T_\lambda^* = T_\lambda^*(\overline{Z}) \, T_0^*.$
$T_\lambda^*$ recovers the previous solutions $T_0^*$ and $T_1^*$  for $\lambda=0$ and $\lambda=1$, respectively (see Table \ref{tab:summaryt}).

\subsubsection{Exclusive latent dependence with covariates}

We now require each latent dimension to be associated with its covariate while being
cross-uncorrelated with all others, i.e. the cross-covariance $\cov(T\overline{Z}, Y) =
T\Sigma_{\overline{Z}Y}$ is diagonal
\begin{equation}
\label{eq:optim_excl}
    \max_T \ \tr(T\Sigma_{\overline{Z}Y})
    \textrm{ }\text{s.t.}\textrm{ } (T\Sigma_{\overline{Z}Y})_{pq}=0\ (p\neq q),
    \textrm{ }\diag(T T')=(1,\dots,1)'.
\end{equation}
With $T=[t_1,\dots,t_P]'$ and $a_p\coloneq(\Sigma_{\overline{Z}Y})_{\cdot p}$, each row solves
\[
\max_{\|t_p\|=1}\ t_p'a_p
\quad\text{s.t.}\quad t_p'a_q=0\ \ (q\neq p).
\]
The constraints force $t_p$ to be orthogonal to all $a_q$ for $q\neq p$, so the optimum lies in a one-dimensional subspace spanned by  $(\Sigma_{\overline{Z}Y}^{-1})_{p \cdot}$ as $(\Sigma_{\overline{Z}Y}^{-1})_{p \cdot}(\Sigma_{\overline{Z}Y})_{\cdot q} = \mathbb{I}(p=q)$. Thus, taking into account the second constraint, we get
\begin{equation}
\label{eq:tex}
t_p^* = \frac{(\Sigma_{\overline{Z}Y}^{-1})_{p \cdot}'}{\norm{(\Sigma_{\overline{Z}Y}^{-1})_{p \cdot}}}
\end{equation}
giving
$T_{\mathrm{ex}}^* =T_{\mathrm{ex}}^*(\overline{Z}) T_{0}^*$
for a general $Z$. Note that here invertibility of $\Sigma_{ZY}$ is required for feasibility.

\subsubsection{The comparison of the strength of the dependencies of the regimes}

Theorem~\ref{lemma1} orders the four regimes by latent-covariate dependence strength (proof in Section~\ref{sec:proof}): dependence increases from the independent regime~(ii) through the intermediate regime~(iii) to the unconstrained regime~(i), while the exclusive regime~(iv) attains weaker alignment than all of~(i)-(iii). The result makes the trade-off between latent-covariate dependence and structural constraints on the latent precise: in particular latent independence constraint in (ii) carries a cost in alignment relative to the unconstrained regime (i), and the intermediate regime~(iii) traces the entire spectrum of solutions between the two extremes. For general $Z$, the regime regularizes the transformation using $\|T(T_0^*)^{-1}-I\|_F$, the deviation from the decorrelating solution. This choice makes the problem separable and admits the closed-form solutions $T_\lambda$, which is not the case for the latent-correlation penalty $\|\cov(\tilde{Z})-I\|_F$. 
%To our knowledge, no prior work connects the latent independence-covariate dependence tension in latent representations to a provable ordering over a continuum of regimes. 
All four transformations and their per-dimension covariate dependences are summarised in Section~\ref{sec:summary_of_t}.

\begin{theorem}
\label{lemma1}
(a) Assume $\Sigma_Z$ is positive definite and that $\Sigma_{ZY}$ has full rank.  Let
$
J(T)\coloneq \tr\!\big(\cov(\tilde Z, Y)\big)$
denote the total latent dimension-covariate alignment, where $\tilde Z = TZ$. Then the optimal
transformations of the four regimes are ordered by their alignment strength,
\[
J(T_{\mathrm{ex}}^*) \;\le\; J(T_0^*) \;\le\; J(T_\lambda^*) \;\le\; J(T_1^*),
\qquad \lambda \in [0,1]
\]
and $\lambda \mapsto J(T_\lambda^*)$ is non-decreasing, so the intermediate regime interpolates
$J(T_0^*)$ and $J(T_1^*)$. Moreover, all three inequalities hold with equality simultaneously if and only if
$\big(\Sigma_{ZY}'\Sigma_Z^{-1}\Sigma_{ZY}\big)^{1/2}$ is diagonal, i.e. the whitened cross-covariance columns are mutually orthogonal. \\
(b) Let $F(T) = \|\cov(\tilde{Z} - I\|_F =\|T \Sigma_Z T^\top - I\|_F$
measure the deviation from mutually decorrelated unit-variance latent dimensions, then
\begin{equation*}
    F(T_0^*) \;\leq\; F(T_1^*) \quad \textrm{and} \quad F(T_0^*) \;\leq\; F(T_{\mathrm{ex}}^*).
\end{equation*}
\end{theorem}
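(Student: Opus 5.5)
We would work entirely in the whitened coordinates $\overline{Z}=T_0^*Z$, which by the composition remark before the theorem (Section~\ref{sec:identifiability}) does not change any of the quantities $J(\cdot)$ or $F(\cdot)$. The key first step is to identify the whitened cross-covariance explicitly. Set $M\coloneq\Sigma_{ZY}'\Sigma_Z^{-1}\Sigma_{ZY}$; it is positive definite by the hypotheses. From \eqref{eq:t0},
\[
A\coloneq\Sigma_{\overline{Z}Y}=T_0^*\Sigma_{ZY}=M^{-1/2}M=M^{1/2},
\]
so $A$ is \emph{symmetric positive definite}. Denote its columns by $a_p$. In particular $a_{pp}=A_{pp}>0$. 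Every regime's alignment can then be written through $A$:
\[
J(T_0^*)=\tr A=\sum_p a_{pp},\qquad J(T_1^*)=\sum_p\|a_p\|,\qquad J(T_\lambda^*)=\sum_p t_{p,\lambda}'a_p .
\]
Here $t_{p,\lambda}$ is given by \eqref{eq:tlambda}.

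\textbf{Monotonicity and the middle inequalities.} We argue per coordinate $p$. The vector $t_{p,\lambda}$ is the normalisation of $\lambda a_p+(1-\lambda)e_p$, so it lies in the plane spanned by $e_p$ and $a_p$. Since $e_p'a_p=a_{pp}>0$, the angle between $e_p$ and $a_p$ is strictly less than $\pi/2$. As $\lambda$ runs over $[0,1]$, the unit vector $t_{p,\lambda}$ therefore moves monotonically along the shorter arc from $e_p$ to $a_p/\|a_p\|$. Consequently its angle to $a_p$ is non-increasing, and $t_{p,\lambda}'a_p=\|a_p\|\cos(\cdot)$ is non-decreasing, going from $a_{pp}$ at $\lambda=0$ to $\|a_p\|$ at $\lambda=1$.

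If one prefers to avoid geometry, a backup argument exists. The standard exchange inequality between the optimality conditions at $\lambda<\mu$ shows that $t'a_p-t'e_p$ is non-decreasing in $\lambda$. Alternatively, one can differentiate the explicit formula directly.

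Summing over $p$ gives $J(T_0^*)\le J(T_\lambda^*)\le J(T_1^*)$ and that $\lambda\mapsto J(T_\lambda^*)$ is non-decreasing. The outer inequality $J(T_0^*)\le J(T_1^*)$ is simply $a_{pp}\le\|a_p\|$, with equality iff $a_p$ is a multiple of $e_p$.

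\textbf{The exclusive regime.} This is the step I expect to be the main obstacle. The naive Cauchy–Schwarz bound only compares $J(T_{\mathrm{ex}}^*)$ with $J(T_1^*)$, not with $J(T_0^*)$, so symmetry of $A$ must be used.

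With $r_p$ the $p$-th row of $A^{-1}$, \eqref{eq:tex} gives
\[
J(T_{\mathrm{ex}}^*)=\sum_p \frac{r_pa_p}{\|r_p\|}=\sum_p\frac{1}{\|r_p\|}.
\]
Two facts then close the argument. First, $\|r_p\|\ge |(A^{-1})_{pp}|=(A^{-1})_{pp}$. Second, for symmetric positive definite $A$, Cauchy–Schwarz in the $A$-inner product gives
\[
1=(e_p'e_p)^2=\bigl(e_p'A^{1/2}A^{-1/2}e_p\bigr)^2\le A_{pp}\,(A^{-1})_{pp}.
\]
Hence $1/\|r_p\|\le 1/(A^{-1})_{pp}\le A_{pp}$, and summing yields $J(T_{\mathrm{ex}}^*)\le J(T_0^*)$.

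For the equality case, equality in the second fact requires $A^{1/2}e_p\parallel A^{-1/2}e_p$, i.e.\ $e_p$ is an eigenvector of $A$. Equality in each of the three inequalities is thus equivalent to every $e_p$ being an eigenvector, i.e.\ $A=M^{1/2}$ diagonal. This is the same condition as in the first chain, i.e.\ orthogonal columns of $A$. Conversely, if $A$ is diagonal, then all $t_p=e_p$ and all four $J$ values coincide.

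\textbf{Part (b).} This part is immediate. By construction $T_0^*\Sigma_ZT_0^{*\prime}=I$, so $F(T_0^*)=0$. Since $F$ is a norm it is non-negative, giving $F(T_0^*)\le F(T_1^*)$ and $F(T_0^*)\le F(T_{\mathrm{ex}}^*)$. In whitened coordinates this reads $F(T)=\|TT'-I\|_F$ for $T=T(\overline{Z})$.
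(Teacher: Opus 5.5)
Your proposal is correct and follows essentially the same route as the paper: your $A=\Sigma_{\overline{Z}Y}$ is the paper's $M$. The exclusive-regime bound uses the same pair of Cauchy--Schwarz steps, $(A^{-1})_{pp}\le\|A^{-1}e_p\|$ and $A_{pp}(A^{-1})_{pp}\ge 1$, with the same eigenvector equality condition, and part (b) is the same observation that $F(T_0^*)=0$.

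The one local difference is how you get monotonicity in $\lambda$. You argue geometrically that the direction of $\lambda a_p+(1-\lambda)e_p$ sweeps the arc from $e_p$ to $a_p/\|a_p\|$, whereas the paper differentiates the closed form of $J_{T_\lambda^*,p}$ and obtains a derivative proportional to $(1-\lambda)(\|M_{\cdot p}\|^2-M_{pp}^2)\ge 0$. Both work. Your exchange-inequality backup, as written, only gives monotonicity of $t'a_p-t'e_p$; to conclude for $t'a_p$ itself you also need the optimality inequality at the larger $\lambda$.
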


\subsection{Statistical model}

We now embed the framework in a probabilistic factor analysis model that uses the covariates to learn informed factors and the transformation family $T_\lambda$ to reshape them.

\subsubsection{Factor analysis with pairwise dependencies with covariates}

\label{sec:population_model}
We consider a random vector $X \in \mathbb{R}^D$ of $D$ features following a factor model
\[
X | Z,W,\tau \sim \mathcal{N}(W Z, D_{1/\tau}),
\]
 where $Z$ is a latent representation, with each component corresponding to a  factor, and $W \in \mathbb{R}^{D \times P}$ is a matrix of loadings. The matrix $D_{1/\tau} = \mathrm{diag}(1/\tau_1,\ldots,1/\tau_D)$ encodes feature-specific noise variances, where $\tau_d$ denotes the precision of the $d$-th observed feature.
 In this model, with no further constraints on $W$ or $Z$, the representation is invariant to invertible linear transformations, as
\[
WZ = (W T^{-1})(T Z) = \tilde{W}\tilde{Z},
\]
for any invertible matrix $T \in \mathbb{R}^{P \times P}$.

We extend this framework by introducing a vector of covariates $Y \in \mathbb{R}^P$, with covariance matrix $\Sigma_Y := \mathrm{Cov}(Y)$, assumed to have unit diagonal entries.
Each latent factor is then modeled as being \emph{informed} by the corresponding covariate through a pairwise linear relationship (see Fig. \ref{fig:PGM})
\[
Z|Y\sim \mathcal{N}(\beta^{(0)} + D_{\beta} Y, D_{1-\beta^2}),
\]
where $\beta^{(0)} \in \mathbb{R}^P$ is a vector of intercepts, $D_\beta = \mathrm{diag}(\beta_1,\ldots,\beta_P)$, and coefficients $\beta_p \in [0,1]$. The matrix $D_{1-\beta^2} = \mathrm{diag}(1-\beta_1^2,\ldots,1-\beta_P^2)$ ensures unit variance of each latent factor. The full covariance of $Z$ is given by
\[
\Sigma_Z \coloneq \mathrm{Cov}(Z) = D_\beta \Sigma_Y D_\beta + D_{1-\beta^2}
\]
which shows that latent factors are generally dependent due to the dependence structure of $Y$. However, conditionally on $Y$, the factors become independent since their conditional covariance matrix is diagonal:
\[
\Sigma_{Z|Y} \coloneq\mathrm{Cov}(Z|Y)= D_{1-\beta^2}.
\]
Under this model, the cross-covariance between latent factors and covariates is given by
\[
\Sigma_{ZY} \coloneq \mathrm{Cov}(Z,Y) = D_\beta \Sigma_Y.
\]

\subsubsection{Informed factor analysis (iFA)}

We introduce a probabilistic
graphical model that follows the generative formulation of
Section~\ref{sec:population_model} augmented with additional uninformed latent
factors (Fig.~\ref{fig:PGM}) and using transformations (i)-(iii) introduced in Section \ref{sec:T_optimization}. Let $\mathbf{X}\in\mathbb{R}^{N\times D}$ denote
the observed features, $\mathbf{Y}\in\mathbb{R}^{N\times P}$ the covariates
(assumed standardized to unit variance per column), and
$\mathbf{Z}\in\mathbb{R}^{N\times K}$ the latent factors with $P\leq K$. The
first $P$ factors are informed by the covariates as in
Section~\ref{sec:population_model}; the remaining $K-P$ are uninformed and
follow a standard Gaussian prior. The invertible transformation $T\in\mathbb{R}^{K\times K}$ is applied to factors $\tilde{z}_{n,\cdot} = z_{n,\cdot}T'$, and we take $T$ to act
trivially on the uninformed block, so $\tilde z_{n,P+1:K} = z_{n,P+1:K}$. We place an automatic relevance determination
(ARD) prior on the loadings $W\in\mathbb{R}^{D\times K}$ to encourage factor-level
sparsity, and Gamma priors on the feature-specific noise precisions $\tau_d$.
The joint distribution factorizes as
\begin{align}
\label{eq:joint_likelihood}
    &p(\mathbf{X}, \tilde{\mathbf{Z}}, W, \alpha, \tau \mid\mathbf{Y}, \beta) = \prod_{n=1}^{N}\prod_{d=1}^{D} \mathcal{N}\bigl(x_{n,d}\mid \tilde{z}_{n,\cdot} w_{d,\cdot}',\, 1/\tau_d\bigr)\\
    &\quad \prod_{n=1}^{N} \prod_{p=1}^{P} \mathcal{N}(z_{n,p}\mid\beta_p^{(0)} + \beta_p y_{n,p}, 1-\beta_p^2)\prod_{n=1}^{N} \prod_{k=P+1}^{K} \mathcal{N}(z_{n,k}\mid 0,1) \nonumber \\
    &\quad \prod_{d=1}^{D} \prod_{k=1}^{K} \mathcal{N}(w_{d,k} \mid 0, 1/\alpha_k)\prod_{k=1}^{K}\mathcal{G}(\alpha_k \mid a_0^{(\alpha)}, b_0^{(\alpha)})  \nonumber\\
    &\quad  \prod_{d=1}^{D} \mathcal{G}(\tau_d \mid a_0^{(\tau)}, b_0^{(\tau)}). \nonumber
\end{align}
In the supplement Eq.~\eqref{eq:multimodal} we provide the joint distribution for multi-modal data.

\subsubsection{Inference}
The posterior $p(\tilde{\mathbf{Z}}, W, \alpha, \tau\mid \mathbf{X}, \mathbf{Y}, \beta, \beta^{(0)})$ is approximated by variational inference \cite{vi_book, vi_intro} with the mean-field factorization
\begin{align*}
    q(\tilde Z, W, \alpha, \tau \mid Y, \beta) =& \prod_{n=1}^{N}q(\tilde z_{n, 1:P}) \prod_{n=1}^{N}\prod_{k=P+1}^{K}q(\tilde z_{n, k}) \\
    & \prod_{d=1}^{D}\prod_{k=1}^{K}q(w_{d, k})\prod_{k=1}^{K}q(\alpha_{k})\prod_{d=1}^{D}q(\tau_{d}).
\end{align*}
Variational inference chooses $q$ to
maximize the evidence lower bound (ELBO)
\begin{equation*}
    \mathcal{L}(q,\beta,\beta^{(0)})
    = \mathbb{E}_q\!\bigl[\log p(\mathbf{X}, \mathbf{Z}, W, \alpha, \tau\mid
        \mathbf{Y}, \beta, \beta^{(0)})\bigr] - \mathbb{E}_q[\log q],
\end{equation*}
which satisfies
$\log p(\mathbf{X}\mid \mathbf{Y}, \beta, \beta^{(0)})
   = \mathcal{L} + \mathrm{KL}(q\,\|\,p) \ge \mathcal{L}$.
We maximize $\mathcal{L}$ by coordinate ascent: for each latent variable, the optimal variational parameters are obtained from
\[\log q^*(\theta_j) = \mathbb{E}_{q_{-j}}\log p + \text{const},\]
while $\beta$ and $\beta^{(0)}$ are updated by maximizing $\mathcal{L}$ with
respect to them as point parameters. The updates for
$q(\alpha_k)$ and $q(\tau_d)$ are standard in Bayesian factor analysis
(e.g. \cite{mofa}). Below we provide the updates specific to the covariate-informed
setting. 

The two natural choices of the parameterization of latent factors lead to different conveniences.
The updates for $W$ and $\tau$ are simpler when written in terms of $\tilde z$,
since the likelihood depends on the factors only through $\tilde z$. The
$\beta_p^{(0)}, \beta_p$ updates, in contrast, are simpler in terms of $z$,
since the prior is defined on $z$. Throughout, we use $\tilde z$ as the
variational variable and recover the moments of $z$ via the inverse
transformation $T^{-1}$.
\paragraph{Updates for informed factors $\tilde{z}_{n,1:P}$.} The conditional log-density is a quadratic form in $\tilde{z}_{n,1:P}$, so $q^*(\tilde{z}_{n,1:P}) = \mathcal{N}(\tilde{\mu}_n, \tilde{\Sigma}_n)$ with
\begin{align*}
    \tilde{\Sigma}_n^{-1} &= (T')^{-1}D_{1/(1-\beta^2)}T^{-1} + \sum_{d=1}^{D} \langle \tau_d\rangle\, \langle w_{d,1:P}\, w_{d,1:P}'\rangle, \\
    \tilde{\mu}_n &= \tilde{\Sigma}_n\Bigl[\, T^{-1}D_{1/(1-\beta^2)}\bigl(\beta^{(0)} + D_\beta y_n\bigr) \\
    &\quad + \sum_{d=1}^{D}\langle \tau_d\rangle\,  \bigl(x_{n,d} - \langle w_{d,P+1:K}\rangle'\langle \tilde z_{n,P+1:K}\rangle\bigr)\langle w_{d,1:P}\rangle\Bigr],
\end{align*}
where $\langle\cdot\rangle$ denotes expectation under $q$ and %$D_{1/(1-\beta^2)} = \mathrm{diag}\bigl(1/(1-\beta_p^2)\bigr)$.
$D_{1/(1-\beta^2)} = \mathrm{diag}\bigl(\tfrac{1}{1-\beta_p^2}\bigr)$.
\paragraph{Updates for $\beta_p^{0}$ and $\beta_p$}
The ELBO contribution that depends on $\beta_p^{(0)}$ and $\beta_p$ is
\begin{equation*}
    \mathcal{L}_p(\beta_p^{(0)},\beta_p) = -\tfrac{N}{2}\log(1-\beta_p^2) - \frac{\sum_{n=1}^N\mathbb{E} (z_{n,p} - \beta_p^{(0)}-\beta_p y_{n,p})^2}{2(1-\beta_p^2)},
\end{equation*}
Maximizing with respect to $\beta_p^{(0)}$ gives $\beta_p^{(0)} \;=\; \overline{\langle z_p\rangle} - \beta_p\, \bar y_p$,
with $\overline{\langle z_p\rangle} = N^{-1}\sum_n \langle z_{np}\rangle$ and
$\bar y_p = N^{-1}\sum_n y_{np}$. Substituting this expression back into
$\mathcal{L}_p$ yields
\begin{equation*}
    \mathcal{L}_p(\beta_p) = -\tfrac{N}{2}\log(1-\beta_p^2) - \frac{S_z^{(p)} - 2\beta_p\, C_{zy}^{(p)} + \beta_p^2\, N}{2(1-\beta_p^2)},
\end{equation*}
with 
\begin{align*}
    S_z^{(p)} &= \sum_n \langle (z_{np}-\overline{\langle z_p\rangle})^2\rangle, \\
    C_{zy}^{(p)} &= \sum_n (\langle z_{np}\rangle - \overline{\langle z_p\rangle})(y_{np}-\bar y_p).
\end{align*}
The stationarity equation is cubic in $\beta_p$
\begin{equation*}
    N\beta_p^3 - (1+\beta_p^2)\,C_{zy}^{(p)} + \beta_p\, S_z^{(p)} = 0,
\end{equation*}
which we solve numerically (one-dimensional root-finding within $(0,1)$). When the variational moments are close to the prior $S_z^{(p)} \approx N$, we get the closed-form approximation $\beta_p \approx C_{zy}^{(p)}/N$, i.e., the empirical correlation between $\langle z_p\rangle$ and $y_p$.
\paragraph{Updates for loadings $w_{dk}$.}
The conditional log-joint is quadratic in each $w_{dk}$, giving
$q^*(w_{dk}) = \mathcal{N}\!\bigl(\mu_{dk}^{(w)},\, (\sigma_{dk}^{(w)})^2\bigr)$
with precision and mean
\begin{align*}
    (\sigma_{dk}^{(w)})^{-2}
    &= \langle\alpha_k\rangle + \langle\tau_d\rangle \sum_{n=1}^{N}\langle \tilde{z}_{nk}^2\rangle, \\
    \mu_{dk}^{(w)}
    &= (\sigma_{dk}^{(w)})^{2}\,\langle\tau_d\rangle
       \left[\sum_{n=1}^{N}\langle \tilde{z}_{nk}\rangle\, x_{nd}
       - \sum_{j\ne k}\langle w_{dj}\rangle \sum_{n=1}^{N}\langle \tilde{z}_{nj} \tilde{z}_{nk}\rangle\right].
\end{align*}
The cross-moment $\langle \tilde{z}_{nj} \tilde{z}_{nk}\rangle$ equals
$\langle \tilde{z}_{nj}\rangle\langle \tilde{z}_{nk}\rangle$ whenever $j$ or $k$ lies outside
the informed block (mean-field independence), and equals
$\langle \tilde{z}_{nj}\rangle\langle \tilde{z}_{nk}\rangle + (\tilde{\Sigma}_n)_{jk}$ when both
$j, k\le P$, with $\tilde{\Sigma}_n$ as defined above. Substituting these expressions
makes the dependence on the informed-block covariance explicit, and reduces to
the standard Bayesian factor analysis update when $P = 0$.

\subsubsection{Transformation application}
The transformations we want to apply ($T_\lambda^*$ of Section~\ref{sec:T_FA}) depend on
$(\beta, \Sigma_Y)$, so we do not optimise $T$ and $\beta$ jointly. Instead,
we use a two-stage procedure.

\textbf{Stage 1.} We fit the model with $T = I$ by coordinate-ascent VI as
described above. This yields $\hat q^{(1)}(z_{n,1:P}) = \mathcal{N}(\mu_n,
\Sigma_n)$, point estimates $\hat\beta, \hat\beta^{(0)}$, and the standard
posteriors for $W, \alpha, \tau$.

\textbf{Computing $T_\lambda^*$.} From $\hat\beta$ and the empirical covariate
covariance $\hat\Sigma_Y$ we form
$\hat\Sigma_Z = D_{\hat\beta}\,\hat\Sigma_Y\,D_{\hat\beta} + D_{1-\hat\beta^2}$ and $\hat\Sigma_{ZY} = D_{\hat\beta}\,\hat\Sigma_Y$,
and plug them into the closed-form expressions of
Section~\ref{sec:T_optimization}. We block-extend $T_\lambda^*$ to act as the
identity on the uninformed factors, so $T \in \mathbb{R}^{K\times K}$ with
$T_{1:P, 1:P} = T_\lambda^*$ and $T_{P+1:K, P+1:K} = I$.

\textbf{Stage 2.} We replace the variational means of the informed block
with their $T$-transformed versions,
$\tilde\mu_n \leftarrow T_{1:P}\,\mu_n$. We then hold $\tilde\mu_{n,1:P},
\hat\beta, \hat\beta^{(0)}, T$ fixed and continue coordinate-ascent VI for the rest of variational parameters until convergence.

Stage~2 is a relatively cheap fine-tuning step (standard FA with some factor moments fixed) that can be repeated, for the
same Stage~1 fit, across many values of $\lambda$. It lets the loadings $W$ adapt to the new basis. 

Although the generative model assumes continuous $Y$, the framework extends to binary $y_p$ by replacing correlation coefficient with point-biserial correlations.

%\section{Numerical experiments}

\section{Experimental setup}

Our experiments have two goals: to empirically show the theoretical properties of the transformed latent spaces on simulations, and to demonstrate the framework's practical application as a post-hoc transformation and inside iFA on simulations and real-world data.

\subsubsection{Artificial data} We designed four data-generation scenarios, following the generative process of the model defined in Eq.~\eqref{eq:joint_likelihood} (see Fig.~\ref{fig:PGM} for graphical representation) with $T=I$, each targeting a different realistic covariance structure (Tab.~\ref{tab:sup_scenarios}) among the normally distributed covariates: 1.~mixed positive and negative correlations (PN) (e.g.\ blood-panel markers, where some pairs move together and others in opposite directions), 2.~autoregressive structure (AR) (e.g.\ measurements taken consecutively over time), 3.~purely positive correlations (P) (e.g.\ bone measurements that all scale with overall body size), and 4.~purely negative correlations (N) (e.g.\ dummy-coded categories to which patients are assigned). Across all scenarios we held fixed the number of features ($D=100$), the latent dimension ($K=10$), the number of covariates ($P=5$), the factor-covariate dependence strengths $\beta=(0.9, 0.75, 0.6, 0.45, 0.3)$, and the noise-to-signal ratio ($\theta=0.1$) (Tab.~\ref{tab:sup_sim_par}). We vary two parameters: $\alpha$, controlling covariate correlation strength (from 0 to strong dependence), and $b$, scaling the $\beta$ vector (Tab. \ref{tab:sup_vary_par}). In sample-based experiments we used $N=500$ observations and $20$ repetitions.

\subsubsection{Large model's representations} As a testbed for post-hoc alignment on realistic latent space, we use
image representations from three large pretrained models spanning the main
pretraining paradigms: \texttt{CLIP}~\cite{Radford2021LearningTV} (contrastive
image-text), \texttt{DINOv2}~\cite{oquab2024dinov} (self-supervised
self-distillation), and \texttt{ViT}~\cite{dosovitskiy2021an}
(supervised classification on ImageNet). As input data, we use
 a validation subset of the Tiny-ImageNet dataset \cite{Le2015TinyIV}, which is a downsampled version of ImageNet \cite{5206848}, covering $200$ classes
with $50$ validation images per class. From this subset, we retain $2000$ observations and use the $P=20$ dummy-coded class indicators as covariates corresponding to animal categories (see Fig.~\ref{fig:sup_emb} for the exact labels).
%, resulting in a dataset where class labels are available for half of the observations. 
For each
observation we extract the representations from each of the three
pretrained models.

\begin{figure}
    \centering
    \includegraphics[width=1\linewidth]{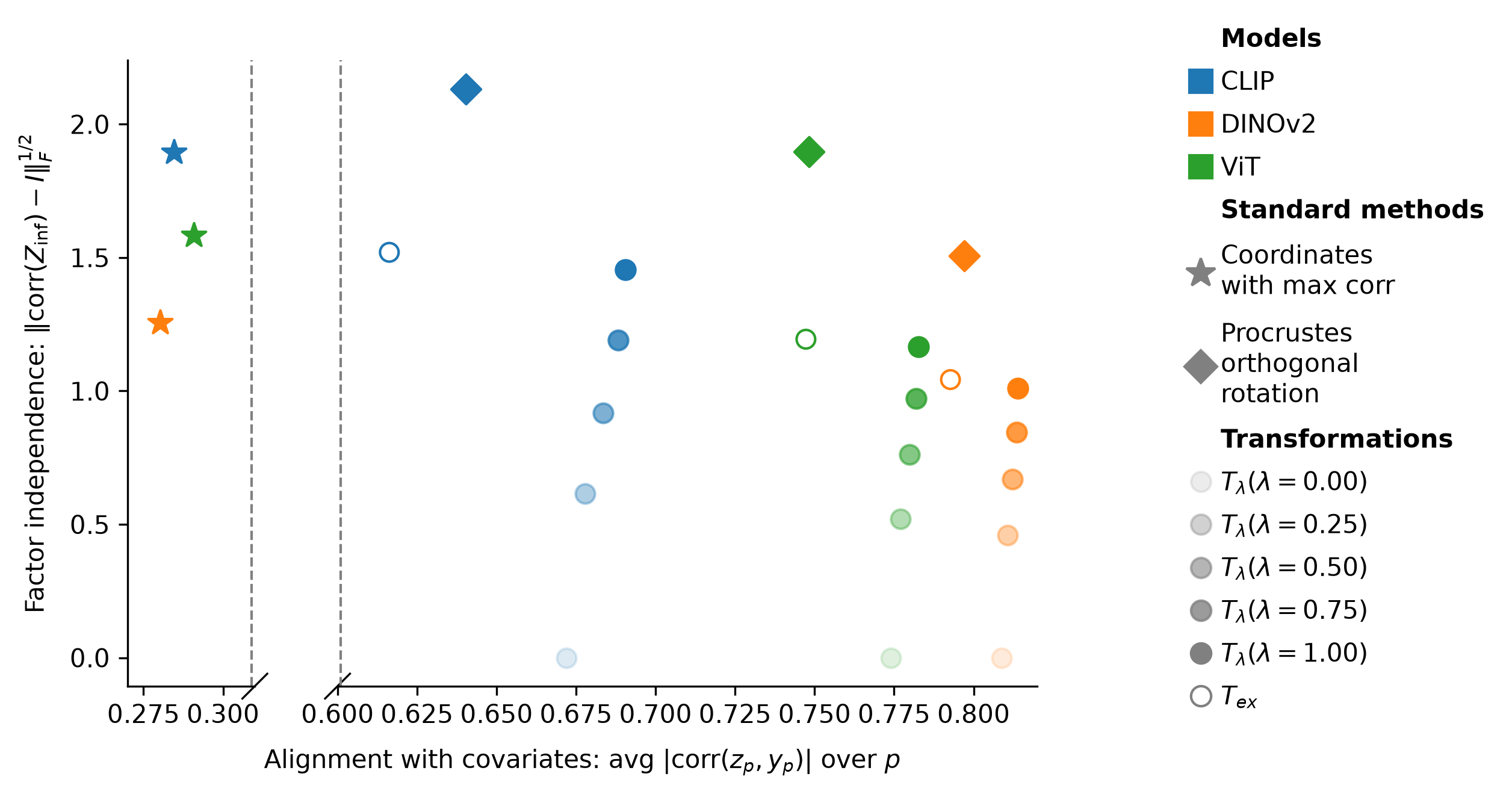}
    \caption{
    Post-hoc latent space transformations of pretrained representations (CLIP, DINOv2, and ViT): \textit{dimensions with max corr} (latent dimensions most strongly correlated with the covariates) and \textit{Procrustes orthogonal rotation} against the proposed transformation applied after Procrustes rotation.}    
    \label{fig:big_models}
\end{figure}

\subsubsection{Real-world data} We used the breast TCGA dataset \cite{Cancer_Genome}, preprocessed by \cite{10.1371/journal.pcbi.1005752}, consisting of multi-modal data (mRNA, miRNA, and proteomics) from $150$ breast cancer patients categorized into cancer subtypes (Basal, Her2, and LumA), which we used as dummy-coded covariates. We used $5$-fold cross-validation, ensuring a stratified distribution of cancer subtypes across folds, and standardized the data matrices within each fold prior to analysis.

\subsubsection{Comparison methods} We compare against methods divided into three categories. \emph{Unsupervised:} \texttt{PCA}, which assumes full latent independence, and \texttt{MOFA} \cite{mofa}, which uses a mean-field approximation that factorizes the variational distribution over factors, approximating rather than strictly enforcing their independence. \emph{Supervised but not paired:} two supervised extensions of \texttt{PCA}, \texttt{SPCA (Bair)} \citep{Bair} and \texttt{SPCA (Barshan)} \citep{BARSHAN}, \texttt{PLS}, which extracts components maximizing the data-covariate covariance, and \texttt{iVAE} \cite{ivae}, a VAE model with covariate-conditioned prior. \emph{Supervised and paired:} \texttt{Y as factors} takes the covariates directly as factors, $Z_p = Y_p$, achieving perfect alignment by construction but not controlling for independence, and \texttt{SOFA} \cite{sofa}, an extension of \texttt{MOFA} that models factor-covariate dependencies using a scaling factor (\texttt{sf}) - a weight on the factor-covariate term in the objective that can up- or down-weight it relative to the data-factor terms (default $\textrm{sf}=0.1$).

For not paired methods we first calculate all pairs of correlation between covariates and latent factors. We then solve a linear sum assignment problem \cite{hungarian} to align each latent with the covariate that best correlates with it.

\subsubsection{Evaluation: performance metrics}

We evaluate performance using the average factor-covariate correlations, the squared Frobenius distance between the covariance matrix of latent factors $Z$ and the identity matrix, and the average variance explained per factor. 
We also report four disentanglement metrics (Section~\ref{sec:sup_metrics}):
disentanglement ($D$), measuring whether each latent factor encodes information about only one covariate,
completeness ($C$), whether each covariate is captured by only one latent factor,
informativeness ($I$), whether the factors jointly contain the information needed to predict the covariate~\cite{eastwood2018a},
and separated attribute predictability (SAP), the gap between the top and second-best single-latent predictor of each covariate~\cite{kumar2018variational}. All four take values in $[0,1]$, with higher values indicating better disentanglement. We further report normalised SAP, a variant that measures the scale of dominance, that divides SAP by the top predictor's performance.

\section{Results}

\subsubsection{Numerical validation of the theoretical results}
\label{sec:T_FA}

For simulation scenarios (1)-(4) defined above, the optimal transformations $T_\lambda^*$ and $T_{\textrm{ex}}^*$ can be computed explicitly from the  covariances $\Sigma_Z$ and $\Sigma_{ZY}$ (Fig.~\ref{fig:fig1}). Under $T_1^*$ the dependence between pairs $(Z_p, Y_p)$ is strongest, and weaker under $T_{1/2}^*$, $T_0^*$, and $T_{\textrm{ex}}^*$, as stated in Theorem~\ref{lemma1}.
%The comparison with $T_0^*$ follows directly from the optimisation problems: in \eqref{eq:optim_max} the objective is optimised over a broader feasible set than in \eqref{eq:optim_indep}, so $T_1^*$ achieves at least as much dependence as $T_0^*$. 
For $T_0^*$, the reduced alignment is a consequence of enforcing uncorrelated latent factors, a constraint absent under $T_1^*$, which consequently produces correlated factors. 
%This illustrates a more general pattern: imposing constraints on the latent structure reduces the achievable dependence between latent dimensions and covariates. 
A parallel argument applies to $T_{\textrm{ex}}^*$, which imposes exclusive one-to-one factor-covariate dependence at a further cost in alignment. The result illustrates a general principle: constraining the latent structure reduces the achievable dependence between latent dimensions and covariates. Moreover, in contrast to $T_1^*$, under the constraints imposed by $T_{\textrm{ex}}^*$ or $T_{0}^*$, the factor covariance matrix no longer resembles the dependence structure among the covariates. Similar conclusions follow across all scenarios (Figs.~\ref{fig:sup_sim1_fig1}-\ref{fig:sup_sim4_fig1}, panels A, B, and C).

\subsubsection{Post-hoc application of the transformations
}

When pretrained representations are available, the transformations can be computed from empirical estimates $\hat{\Sigma}_Z$ and $\hat{\Sigma}_{ZY}$ and applied post-hoc to the latent representation (Fig.~\ref{fig:big_models}). The proposed
transformations achieve substantially stronger one-to-one dependence between latent dimensions and covariates than the naive baseline of selecting the most correlated dimension of the original representation, and they empirically reproduce the covariate dependence-strength ordering of Theorem~\ref{lemma1}. Applying a Procrustes rotation \cite{book_procrustes, kabsch} directly to the
original embedding yields correlated latent dimensions, because the initial embedding is itself correlated, $T_0^*$, in contrast, first whitens the representation and thus produces uncorrelated dimensions. The transformation $T_1^*$ trades some of this independence for higher covariate alignment, though for all three model embeddings the resulting dimensions remain less correlated than under the plain Procrustes rotation. The covariance matrices (Fig.~\ref{fig:sup_emb}) confirm both the weak
alignment of the initial embedding and that the constraints on the latent imposed by $T_{\textrm{ex}}$ and $T_0$ are satisfied.
Downstream applications that require decoding from the transformed latent, e.g. new sample generation through the original decoder, would apply the inverse transformation to map back to the decoder's expected input space.

\begin{figure}
    \centering
    \includegraphics[width=\linewidth]{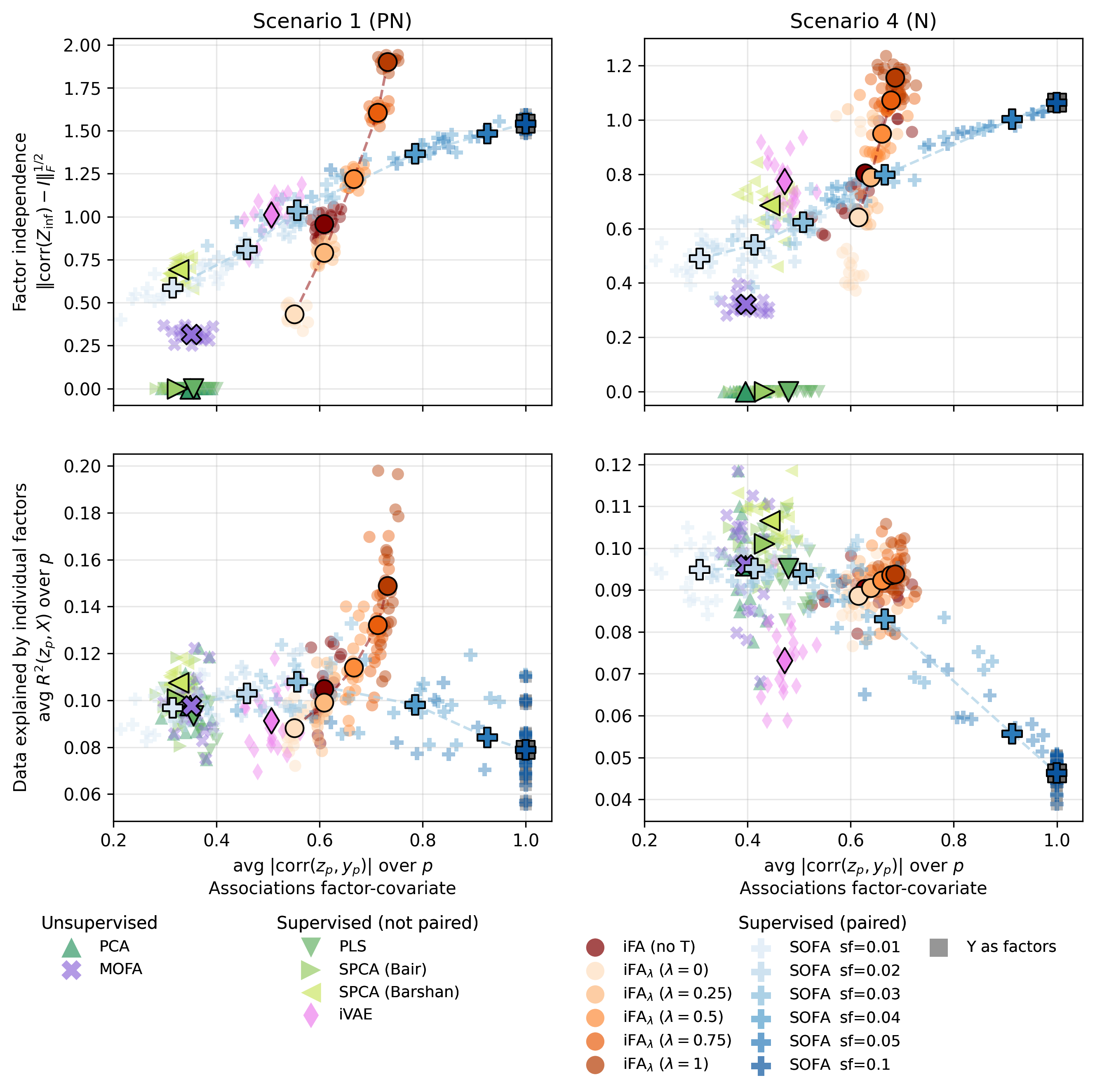}
    \caption{Trade-off between covariate dependence and factor independence for simulation scenarios~1 (PN) and~4 (N). Outlined markers denote means over $20$ repetitions.}
    \label{fig:sim_res}
\end{figure}

\begin{figure*}[h]
    \centering
    \includegraphics[width=\linewidth]{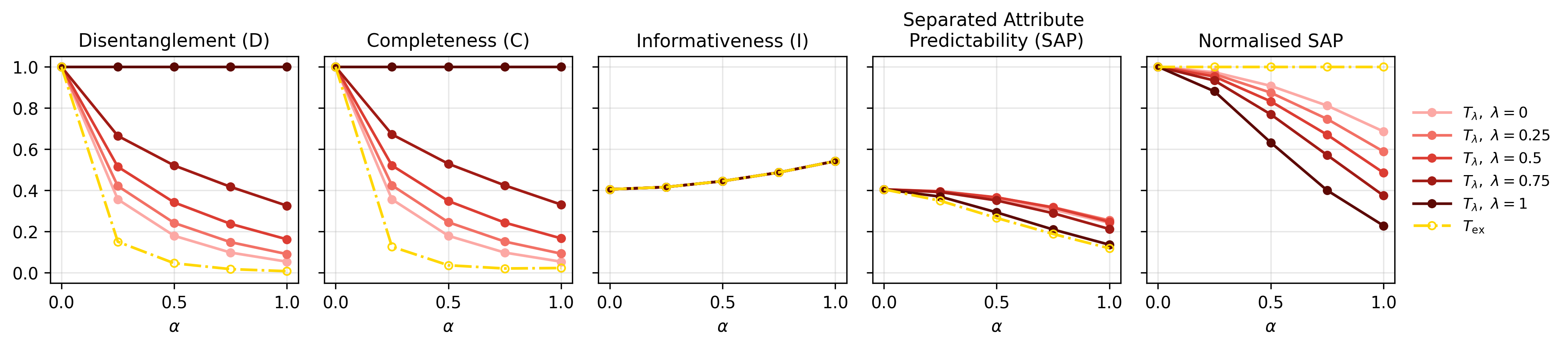}
    \caption{Disentanglement metrics for the latent space transformed using $\,T_\lambda$ family and the exclusive transform $T_{\mathrm{ex}}$ as a function of covariate-correlation strength~$\alpha$ in simulation scenario 1 (PN).}
    \label{fig:dci}
\end{figure*}

\subsubsection{The iFA model controls the trade-off}

Having established the theoretical trade-off between latent-dimension-covariate dependence and latent structure, we now show that the same trade-off emerges when the transformations are folded into the iFA model's inference procedure. 

As a sanity check on the inference procedure, we verify that the pre-transformation variant (\texttt{iFA no $T$}) converges to the true parameters as the number of observations grows: in Scenario~1, both the estimated $\beta$ and the informed latent factors approach their true values (Fig.~\ref{fig:sup_consistency}). Moreover, \texttt{iFA no $T$} recovers the assumed average factor-covariate dependence in every scenario: diagonal correlations concentrate around $\overline{\beta}$, reaching approximately $0.6$ at $b=1$  and $0.4$ at $b=0.6$ (Fig.~\ref{fig:sim_res}; Figs.~\ref{fig:sup_sim1_fig1}-\ref{fig:sup_sim4_fig1}, panel~D).

In Fig. \ref{fig:sim_res}, an ideal method would achieve average covariate correlation of $1$ and Frobenius norm $0$, jointly maximising covariate dependence and factor independence. Theorem~\ref{lemma1} shows this is unattainable in general, as the two objectives trade off. The family of the iFA models with the transformation applied (\texttt{iFA$_\lambda$}) traces this trade-off: as $\lambda$ increases, covariate dependence grows and factor independence weakens (Fig.~\ref{fig:sim_res}; Figs.~\ref{fig:sup_sim1_fig1}--\ref{fig:sup_sim4_fig1}, panel~D). The gain in covariate dependence from moving toward $T_1^*$ is largest when the covariates are strongly inter-correlated, at the cost of factor independence (Figs.~\ref{fig:sup_sim1_fig2}--\ref{fig:sup_sim4_fig2}, panel~A). 
Panel~B of Figs.~\ref{fig:sup_sim1_fig2}-\ref{fig:sup_sim4_fig2} confirms the same pattern across the $\beta$ sweep: correlations of \texttt{iFA$_\lambda$} rise from around $0.2$ to $0.6$ (which are the true generative values), and $T_1^*$ consistently yields stronger covariate dependence and stronger factor inter-dependence than $T_0^*$ across all $\beta$ values. 

Variance explained per factor remains high across all settings and all values of $\lambda$, showing that the transformation reshapes the latent geometry without sacrificing reconstruction or per-factor interpretability of $X$. 

Per-factor interpretability is critical for real-data applications, where individual factors should map onto meaningful directions in the observed features. We confirm this on real data (Fig.~\ref{fig:placeholder}): the variance explained by individual factors remains high across the entire \texttt{iFA$_\lambda$} family, with $T_1^*$ explaining the most and $T_0^*$ the least. The trade-off between latent independence and covariate dependence is preserved throughout.

\begin{figure}[b]
    \centering
    \includegraphics[width=\linewidth]{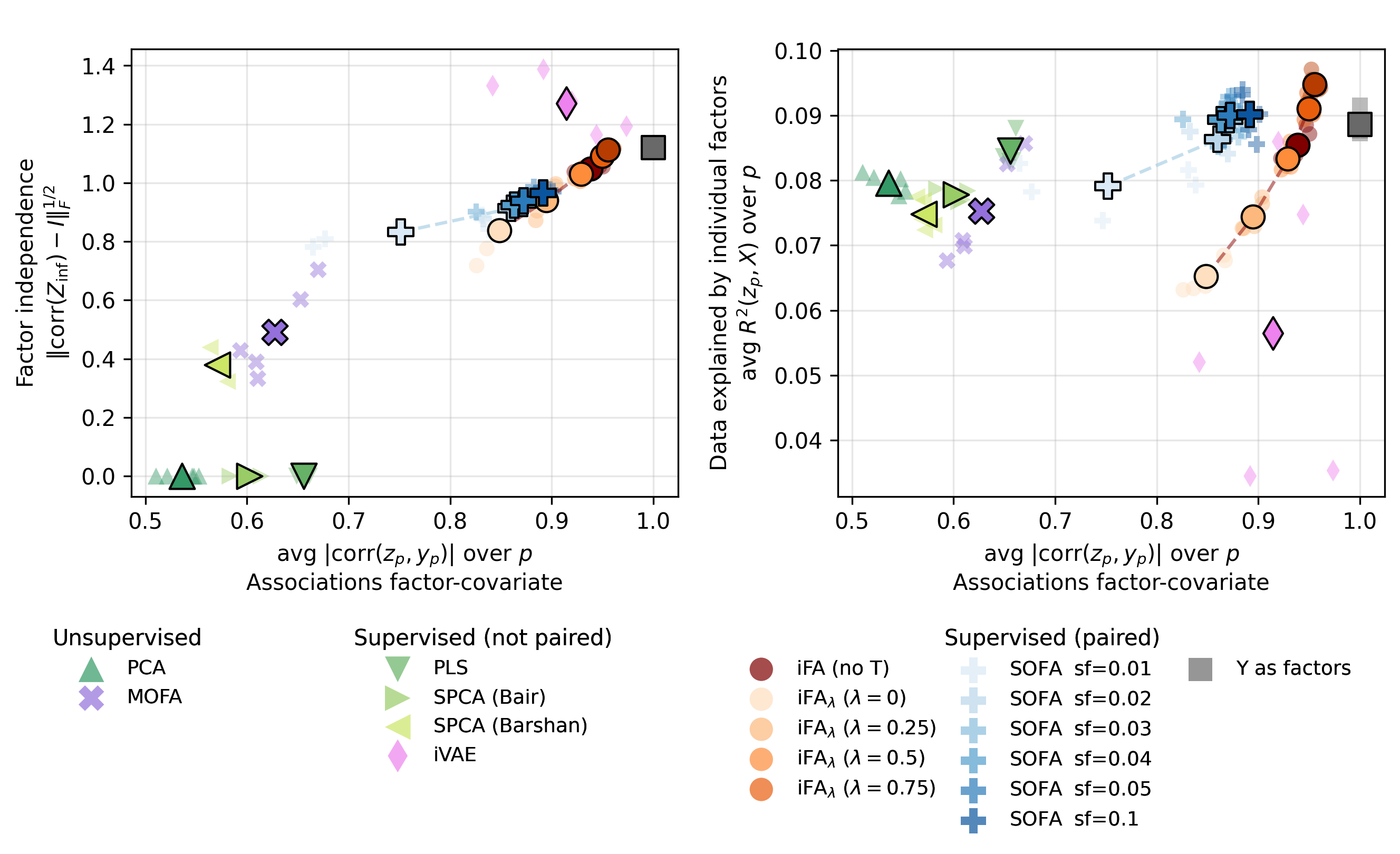}
    \caption{Results on the breast TCGA multi-omics data. Outlined markers denote means over $20$ repetitions.}
    \label{fig:placeholder}
\end{figure}

\subsubsection{Disentanglement-metric view of the trade-off}

For population-level transformations (Fig.~\ref{fig:dci}) and for the \texttt{iFA$_\lambda$} (Figs.~\ref{fig:sup_sim1_fig2}-\ref{fig:sup_sim4_fig2}), disentanglement ($D$) and completeness ($C$) are lower at small $\lambda$ and improve as $\lambda$ grows, with most prominent difference when the covariate correlations are strong. Enforcing latent-dimension independence carries a $D$/$C$ cost: each factor holds less covariate-specific information, and each covariate's information is spread across multiple factors. Informativeness ($I$) tracks signal strength instead - more correlated covariates (larger $b$) and stronger factor-covariate dependencies (larger $\alpha$) both increase $I$ regardless of $\lambda$. 
SAP and its normalised variant, in contrast, favour transformations with small $\lambda$: when factors are constrained to be independent, each covariate is predicted most strongly by a single latent. The two variants disagree on $T_{\mathrm{ex}}^*$: normalised SAP is high as each factor genuinely carries information about only one covariate, by construction, while the unnormalised version is low as the overall dependence magnitude is small (Fig.~\ref{fig:dci}).

\subsubsection{iFA's strengths over competing methods}

Most competing methods fall behind the front traced by the \texttt{iFA$_\lambda$} family (Fig.~\ref{fig:sim_res}, Fig.~\ref{fig:placeholder}, Figs. \ref{fig:sup_sim1_fig1}-\ref{fig:sup_sim4_fig1}). Some unsupervised and supervised baselines achieve strong factor independence at the cost of low covariate dependence: \texttt{MOFA}, \texttt{PCA}, \texttt{PLS}, and \texttt{SPCA (Bair)}, the last three of which produce independent factors by construction. The regression baseline (\texttt{Y as factors}) sits at the opposite extreme, attaining maximal covariate dependence but inheriting the same factor inter-dependence as the covariates themselves. SOFA is the closest competitor in structure: it also parameterises the strength of covariate dependence, via a scaling factor on the objective, however, this parameterisation gives only indirect control, since the scaling factor reweights loss terms rather than the underlying constraints. This lack of steerability causes \texttt{SOFA} to collapse toward \texttt{Y as factors} at high scaling factors. When the covariates are themselves strong factors of variation in $X$, this collapse is benign, but when they are only weakly informative, SOFA loses any advantage over the regression baseline and its factors lose interpretability in terms of $X$. The lack of steerability is also visible on real data where SOFA even at the largest scaling factor achieves weaker covariate dependence than iFA at large $\lambda$ (Fig.~\ref{fig:placeholder}). Across disentanglement metrics, \texttt{SOFA} in general spans a broader range of scores than \texttt{iFA$_\lambda$} (Figs.~\ref{fig:sup_sim1_fig2}-\ref{fig:sup_sim4_fig2}). When covariate correlations are varied, it is \texttt{iFA$_\lambda$} whose scores shift systematically, while \texttt{SOFA}'s remain largely unchanged.

\texttt{iFA$_\lambda$} is slower than the classical baselines but faster than \texttt{SOFA}. Once the base model is fitted, multiple $T_\lambda$ transformations can be applied at negligible cost, making regime exploration cheaper than refitting (Fig.~\ref{fig:sup_time}).

\section{Conclusions}

We introduced a supervised framework for disentangled representation learning that navigates the trade-off between latent-covariate dependence and structural constraints on the latent space. Four disentanglement regimes, independent, intermediate, unconstrained, and exclusive, arise as solutions of optimizations problems, and we proved an ordering of the regimes by latent-covariate dependence (Theorem~\ref{lemma1}). The resulting transformations admit closed-form expressions and apply both post-hoc to pretrained embeddings and inside a new probabilistic factor-analysis model (iFA). Experiments confirm that iFA enable controllability between covariate dependence and latent independence while preserving reconstruction.

%% The acknowledgments section is defined using the "acks" environment
%% (and NOT an unnumbered section). This ensures the proper
%% identification of the section in the article metadata, and the
%% consistent spelling of the heading.
%\begin{acks}
%acknowledgments
%\end{acks}

%%
%% The next two lines define the bibliography style to be used, and
%% the bibliography file.

\bibliographystyle{ACM-Reference-Format}
\bibliography{references}
\clearpage
\newpage
%%
%% If your work has an appendix, this is the place to put it.
\appendix
\renewcommand{\thefigure}{S\arabic{figure}}
\renewcommand{\thetable}{S\arabic{table}}

\section{Proofs and derivations}

\subsection{Reformulation of an orthogonal Procrustes problem \cite{book_procrustes}}
\label{sec:procrustes}
Let $Z \in \mathbb{R}^P$ and $Y \in \mathbb{R}^P$ be both centered so that $\mathbb{E}[Z] = \mathbb{E}[Y] = 0$.
Let $\overline Z \coloneq \Sigma_Z^{-1/2} Z$ denote the whitened vector, which satisfies
$\cov(\overline Z) = \Sigma_Z^{-1/2}\Sigma_Z\Sigma_Z^{-1/2} = I$.
The orthogonal Procrustes problem in population terms is
\[
\min_{Q'Q = I} \ \mathbb{E}\big\| Y - Q\,\Sigma_Z^{-1/2} Z \big\|^2 .
\]
Expanding the squared norm,
\[
\mathbb{E}\big\| Y - Q\,\overline Z \big\|^2
= \mathbb{E}\|Y\|^2 \;-\; 2\,\mathbb{E}\big[Y' Q\,\overline Z\big] \;+\; \mathbb{E}\big[\overline Z{}' Q'Q\,\overline Z\big].
\]
The first term is constant in $Q$, and the last term equals $\mathbb{E}[\overline Z{}'\overline Z] = \tr(I) = P$
since $Q'Q = I$, hence is also constant. Minimizing the objective is therefore
equivalent to maximizing the cross term. Using $\mathbb{E}[ZY'] = \Sigma_{ZY}$ and the
cyclic invariance of the trace,
\[
\mathbb{E}\big[Y' Q\,\Sigma_Z^{-1/2} Z\big]
= \tr\!\big(Q\,\Sigma_Z^{-1/2}\,\mathbb{E}[ZY']\big)
= \tr\!\big(Q\,\Sigma_Z^{-1/2} \Sigma_{ZY}\big).
\]
Consequently,
\[
\min_{Q'Q = I} \ \mathbb{E}\big\| Y - Q\,\Sigma_Z^{-1/2} Z \big\|^2
\quad\Leftrightarrow\quad
\max_{Q'Q = I} \ \tr\!\big(Q\,\Sigma_Z^{-1/2} \Sigma_{ZY}\big),
\]
which is the optimization problem we defined.

\subsection{Closed-form solution of $T_0^*$ \cite{higham_formula}}
\label{sec:sigma_formula}

Let $A = UDV'$ be the singular value decomposition of $A$. Then
\[
A'A = (UDV')'(UDV') = VDU'\,UDV' = VD^2V',
\]
which is an eigendecomposition, so
\[
(A'A)^{-1/2} = VD^{-1}V'.
\]
Multiplying by $A' = VDU'$ gives
\[
(A'A)^{-1/2}A' = VD^{-1}V'\,VDU' = VD^{-1}DU' = VU'.
\]
Hence the maximizer can be written in closed form as
\[
Q^* = VU' = (A'A)^{-1/2}A'.
\]
With $A = \Sigma_Z^{-1/2}\Sigma_{ZY}$, we have $A'A = \Sigma_{ZY}'\Sigma_Z^{-1}\Sigma_{ZY}$
and $A' = \Sigma_{ZY}'\Sigma_Z^{-1/2}$, so
\[
Q^* = \big(\Sigma_{ZY}'\Sigma_Z^{-1}\Sigma_{ZY}\big)^{-1/2}\Sigma_{ZY}'\Sigma_Z^{-1/2},
\]
and therefore
\[
T_0^* = Q^*\Sigma_Z^{-1/2}
= \big(\Sigma_{ZY}'\Sigma_Z^{-1}\Sigma_{ZY}\big)^{-1/2}\Sigma_{ZY}'\Sigma_Z^{-1}.
\]

\subsection{Notes on $T_1^*$: connection to regression.}
\label{sec:notes_t1}

The strong-dependence transformation coincides with multivariate least-squares regression of the
covariates on the latent variables. The population OLS coefficient of $Y_p$ on $Z$ is
$b_p^* = \Sigma_Z^{-1}(\Sigma_{ZY})_{\cdot p}$, so the optimal 
\[t_p^* = \Sigma_Z^{-1}(\Sigma_{ZY})_{\cdot p}/\big((\Sigma_{ZY})_{\cdot p}'\Sigma_Z^{-1}(\Sigma_{ZY})_{\cdot p}\big)^{1/2}
= b_p^*/\|b_p^*\|_{\Sigma_Z},\]
where
$\|x\|_{\Sigma_Z} \coloneq (x'\Sigma_Z x)^{1/2}$,
is exactly the coefficient vector, rescaled so that
$\var(t_p^{*\prime}Z) = 1$. Equivalently $T_1^* = D_1\,\Sigma_{ZY}'\Sigma_Z^{-1}$, with
$\Sigma_{ZY}'\Sigma_Z^{-1}$ the matrix of regression coefficients. The attained alignment is the
multiple correlation coefficient,
\[
J(T_1^*,p) = \sqrt{(\Sigma_{ZY})_{\cdot p}'\Sigma_Z^{-1}(\Sigma_{ZY})_{\cdot p}} = \sqrt{(M^2)_{pp}} = R_p,
\]
the maximal correlation of $Y_p$ with any linear combination of $Z$ (for derivation details of $t_p^*$ for general $Z$ and $J(T_1^*,p)$, see Section~\ref{sec:summary_of_t}).

\subsection{Identifiability}
\label{sec:identifiability}

\begin{proposition}
\label{prop:identifiability}
Fix a regime $\bullet \in \{0, 1, \lambda, \mathrm{ex}\}$ and let $T_\bullet^*(Z)$ denote its optimal
transformation.
For any invertible $A \in \mathbb{R}^{P\times P}$,
replacing the latent by the equivalent representation $Z' = AZ$ yields the same structured
representation:
\[
T_\bullet^*(Z')\,Z' = T_\bullet^*(Z)\,Z.
\]
Hence $\tilde Z_\bullet \coloneq T_\bullet^*(Z)\,Z$ identifies a canonical representation of the latent.
\end{proposition}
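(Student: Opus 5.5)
The plan is to show that every quantity the regimes depend on transforms equivariantly under $Z \mapsto Z' = AZ$. Concretely, I will prove $T_\bullet^*(Z') = T_\bullet^*(Z)\,A^{-1}$ for each regime; the claim then follows at once, since
\[
T_\bullet^*(Z')\,Z' = T_\bullet^*(Z)A^{-1}AZ = T_\bullet^*(Z)Z.
\]
The basic facts are $\Sigma_{Z'} = A\Sigma_Z A'$, which is positive definite whenever $\Sigma_Z$ is, and $\Sigma_{Z'Y} = A\Sigma_{ZY}$, which has full rank whenever $\Sigma_{ZY}$ does. The closed form~\eqref{eq:t0} uses only $\Sigma_Z$ and $\Sigma_{ZY}$, so it does not need the unit-variance normalisation, which $Z'$ may violate. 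The hypotheses of the closed forms are therefore preserved.

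First, the independent regime $T_0^*$, using the closed form~\eqref{eq:t0}. The inner matrix is invariant:
\[
\Sigma_{Z'Y}'\Sigma_{Z'}^{-1}\Sigma_{Z'Y} = \Sigma_{ZY}'A'(A')^{-1}\Sigma_Z^{-1}A^{-1}A\Sigma_{ZY} = \Sigma_{ZY}'\Sigma_Z^{-1}\Sigma_{ZY}.
\]
The outer factor picks up $A^{-1}$ on the right:
\[
\Sigma_{Z'Y}'\Sigma_{Z'}^{-1} = \Sigma_{ZY}'\Sigma_Z^{-1}A^{-1}.
\]
Hence $T_0^*(Z') = T_0^*(Z)A^{-1}$, and the whitened latent $\overline{Z} = T_0^*Z$ is literally the same random vector for $Z$ and $Z'$.

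For regimes $1$, $\lambda$ and $\mathrm{ex}$, the paper defines $T_\bullet^*(Z) = T_\bullet^*(\overline Z)\,T_0^*(Z)$. The factor $T_\bullet^*(\overline Z)$ is given by \eqref{eq:t1}, \eqref{eq:tlambda} or \eqref{eq:tex}. Each is a function of $\Sigma_{\overline Z Y} = T_0^*(Z)\Sigma_{ZY}$ alone; the identity $e_p$ appearing in \eqref{eq:tlambda} refers to the common whitened coordinates. By the first step, $\Sigma_{\overline Z Y}$ is unchanged when $Z$ is replaced by $Z'$, so $T_\bullet^*(\overline Z)$ is unchanged. Composing with $T_0^*(Z') = T_0^*(Z)A^{-1}$ then gives $T_\bullet^*(Z') = T_\bullet^*(Z)A^{-1}$.

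The main subtlety is well-definedness: the optimisers must be unique, so that "the" solution of each problem is what the formulas return. I would address this by noting that the problems are equivariant at the level of feasible sets and objectives. Under the map $T \mapsto TA^{-1}$:
\begin{itemize}
\item the whitening constraint $T\Sigma_ZT' = I$ is preserved;
\item the objective $\tr(T\Sigma_{ZY})$ is preserved;
\item the unit-variance constraint $\diag(T\Sigma_Z T') = \mathbf{1}$ is preserved;
\item the penalty $\|T(T_0^*)^{-1} - I\|_F$ is preserved.
\end{itemize}
Thus $T \mapsto TA^{-1}$ is a bijection between the feasible sets that preserves objective values, and it maps maximisers to maximisers.

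Uniqueness itself comes from three sources:
\begin{itemize}
\item For the Procrustes problem, the maximiser is unique because $\Sigma_Z^{-1/2}\Sigma_{ZY}$ has full rank, so all singular values are positive.
\item For the row-wise Cauchy--Schwarz problems, the maximiser is unique when $a_p \neq 0$, or $\lambda a_p + (1-\lambda)e_p \neq 0$, which full rank guarantees for $\lambda = 1$.
\item For the exclusive regime, the maximiser is unique up to the sign fixed by maximising $t_p'a_p$.
\end{itemize}
I expect the only delicate point to be the intermediate case, where $\lambda a_p + (1-\lambda)e_p$ could vanish. I would handle it by assuming it does not, or by adopting any fixed tie-breaking convention, which is likewise transported by the bijection.
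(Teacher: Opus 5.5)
Your proposal is correct, and it contains the paper's proof. The paper argues only what you put in your well-definedness paragraph: each regime's objective and constraints see the latent only through $\cov(TZ,Y)=T\Sigma_{ZY}$ and $\cov(TZ)=T\Sigma_ZT'$, so $T\mapsto TA^{-1}$ is a value-preserving bijection between feasible sets, and the optimal $\tilde Z$ coincide.

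Your primary route is different. You verify $T_\bullet^*(Z')=T_\bullet^*(Z)A^{-1}$ by direct computation from \eqref{eq:t0} and from the invariance of $\Sigma_{\overline ZY}$. This is more concrete, and it makes explicit three points the paper leaves tacit. First, the unit-variance normalisation, which $Z'$ may violate, is never used. Second, the $\lambda$-penalty $\|T(T_0^*)^{-1}-I\|_F$ is preserved only once $T_0^*$ itself is known to be equivariant, so regime $0$ must be settled first. Third, ``the'' optimiser is unique.

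The price is that the closed-form computation proves the statement only if the general-$Z$ transformation is \emph{defined} as $T_\bullet^*(\overline Z)\,T_0^*$. The paper instead derives that composition from this proposition (Corollary~\ref{col:whiteningok}). Under the paper's reading, in which $T_\bullet^*(Z)$ maximises the general-$Z$ problem, it is your bijection argument that carries the proof.

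Two small fixes. Include the exclusivity constraint $(T\Sigma_{ZY})_{pq}=0$ among the preserved conditions; it is preserved, since $(TA^{-1})(A\Sigma_{ZY})=T\Sigma_{ZY}$. Also drop the hedge about $\lambda a_p+(1-\lambda)e_p$ vanishing. Because $\Sigma_{\overline ZY}=M$ is symmetric positive definite, the $p$-th entry of that vector is $\lambda M_{pp}+(1-\lambda)>0$, so no assumption or tie-breaking is needed.
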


\begin{proof}
The objective and constraints of each regime depend on the latent only through the transformed
vector $\tilde Z = TZ$, via $\cov(\tilde Z, Y) = T\Sigma_{ZY}$ and $\cov(\tilde Z) = T\Sigma_Z T'$.
Under $Z' = AZ$ one has $\Sigma_{Z'} = A\Sigma_Z A'$ and $\Sigma_{Z'Y} = A\Sigma_{ZY}$, so the
substitution $T \mapsto T' = TA^{-1}$ is a bijection between transformations of $Z$ and of $Z'$ with
$T'Z' = TZ$, hence $\cov(T'Z', Y) = \cov(TZ, Y)$ and $\cov(T'Z') = \cov(TZ)$. The objective and
feasible set for $Z'$ therefore coincide with those for $Z$ under this bijection, so both problems
have the same optimal representation $\tilde Z$, and $T_\bullet^*(Z')Z' = T_\bullet^*(Z)Z$.
\end{proof}

\begin{corollary}
\label{col:whiteningok}
Taking $A = T_0^*$ in Proposition~\ref{prop:identifiability}, solving the regime in the whitened frame
and composing with the whitening yields the general $Z$ solution:
$T_\bullet^* = T_\bullet^{*}(\overline Z)\,T_0^*$.
\end{corollary}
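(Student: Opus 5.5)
The plan is to read the corollary as the special case $A = T_0^*$ of Proposition~\ref{prop:identifiability}, and to turn the equality of \emph{representations} given there into an equality of \emph{matrices}.

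\textbf{Step 1: the choice $A = T_0^*$ is admissible.} By \eqref{eq:t0}, $T_0^* = (\Sigma_{ZY}'\Sigma_Z^{-1}\Sigma_{ZY})^{-1/2}\Sigma_{ZY}'\Sigma_Z^{-1}$. This is a product of invertible matrices whenever $\Sigma_Z \succ 0$ and $\Sigma_{ZY}$ has full rank, so $T_0^*$ is invertible. With $Z' = T_0^* Z = \overline Z$ we then have
\[
\Sigma_{\overline Z} = T_0^*\Sigma_Z T_0^{*\prime} = I, \qquad
\Sigma_{\overline Z Y} = T_0^*\Sigma_{ZY} = (\Sigma_{ZY}'\Sigma_Z^{-1}\Sigma_{ZY})^{1/2}.
\]
So the whitened frame is exactly the setting in which the regimes (i), (iii), (iv) were solved. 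As a consistency check, I will note that $T_0^*(\overline Z) = I$. Indeed, the Procrustes step with a symmetric positive definite $\Sigma_{\overline Z Y}$ gives $Q^* = I$, so regime (ii) trivially satisfies the identity.

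\textbf{Step 2: each regime is posed intrinsically in terms of $\tilde Z$.} I will check that each regime, written for a general $Z$, depends on $T$ only through $T\Sigma_{ZY}$ and $T\Sigma_Z T'$, or, for (iii), through $TA^{-1}$.
\begin{itemize}[leftmargin=*]
\item The unit-variance constraint in the general frame reads $\diag(T\Sigma_Z T') = \mathbf{1}$. In the whitened frame this reduces to $\diag(TT') = \mathbf{1}$.
\item The exclusivity constraint is that $T\Sigma_{ZY}$ is diagonal.
\item The penalty in regime (iii) is the paper's $\|T(T_0^*)^{-1} - I\|_F$. Under the bijection $T \mapsto S = T(T_0^*)^{-1}$ it becomes $\|S - I\|_F$, which is precisely the penalty in \eqref{eq:optim_lambda}.
\end{itemize}
Hence, as in the proof of the Proposition, the map $S \mapsto ST_0^*$ is a bijection from the feasible set of the whitened problem onto the feasible set of the general problem, and it preserves the objective value. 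It follows that the set of optimizers for general $Z$ equals $\{S\,T_0^* : S \text{ optimal for } \overline Z\}$.

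\textbf{Step 3: pass from the optimal set to the specific matrix $T_\bullet^*$.} This step needs uniqueness of the whitened optimizers, and I expect it to be the main obstacle. I will verify uniqueness regime by regime from the closed forms.
\begin{itemize}[leftmargin=*]
\item For \eqref{eq:t1}, Cauchy--Schwarz is tight only at $t_p = a_p/\|a_p\|$. Here $a_p \neq 0$ because $\Sigma_{\overline Z Y}$ is invertible.
\item For \eqref{eq:tlambda}, uniqueness holds provided $\lambda a_p + (1-\lambda)e_p \neq 0$. This holds because $a_p'e_p = (\Sigma_{\overline ZY})_{pp} > 0$ for a positive definite matrix, so the two nonnegatively weighted vectors cannot cancel.
\item For \eqref{eq:tex}, the feasible direction is one-dimensional. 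Maximization selects the sign with $t_p'a_p > 0$, so the optimizer is unique.
\item For regime (ii), the orthogonal Procrustes solution is unique when all singular values of $\Sigma_Z^{-1/2}\Sigma_{ZY}$ are positive, which is exactly the full-rank assumption.
\end{itemize}
With uniqueness established, the unique general optimizer is $T_\bullet^*(\overline Z)\,T_0^*$, which is the claimed identity.

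As an alternative route through the Proposition's statement $T_\bullet^*(Z')Z' = T_\bullet^*(Z)Z$, I can use that $\cov(Z) \succ 0$. Then $(T_\bullet^*(\overline Z)T_0^* - T_\bullet^*(Z))Z = 0$ almost surely forces the matrix difference $M$ to satisfy $M\Sigma_Z M' = 0$. Since $\Sigma_Z \succ 0$, this gives $M = 0$, yielding the same conclusion.
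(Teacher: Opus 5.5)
Your proposal is correct and follows the paper's route: the paper obtains the corollary by specializing the bijection $T \mapsto TA^{-1}$ from the proof of Proposition~\ref{prop:identifiability} to $A = T_0^*$ and reading off $T_\bullet^*(\overline Z)\,T_0^* Z = T_\bullet^*(Z)\,Z$, which is exactly your alternative route. Your main argument uses the same bijection but supplies several steps the paper leaves implicit:
\begin{itemize}
\item the check that $T_0^*(\overline Z) = I$;
\item the observation that the regime-(iii) penalty $\|T(T_0^*)^{-1}-I\|_F$ is not a function of $\cov(\tilde Z, Y)$ and $\cov(\tilde Z)$ alone, yet still transforms correctly;
\item uniqueness of each whitened optimizer;
\item the use of $\Sigma_Z \succ 0$ to pass from equal representations to equal matrices.
\end{itemize}
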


\subsection{Summary of the defined transformations}
\label{sec:summary_of_t}

We now unify and summarise the transformations defined above, using the following notation. Let
$M \coloneq (\Sigma_{ZY}'\Sigma_Z^{-1}\Sigma_{ZY})^{1/2}$, and let
\[
J_T = J(T) \coloneq \tr\!\big(\cov(TZ, Y)\big)
\]
denote the total latent dimension-covariate alignment, with per-dimension contribution
\[
J_{T,p} = J(T,p) \coloneq \big(\cov(TZ, Y)\big)_{pp}.
\]
Throughout, we assume $\Sigma_Z \succ 0$ and that $\Sigma_{ZY}$ has full rank. Under these
assumptions $M \coloneq (\Sigma_{ZY}'\Sigma_Z^{-1}\Sigma_{ZY})^{1/2}$ is  symmetric,
and positive definite.
Sections~\ref{seq:deriv_t0}-\ref{seq:deriv_tex} derive the general form of each transformation
together with its per-dimension alignment $J(T,p)$. The results are summarised in Table~\ref{tab:summaryt}.

\begin{table*}[h]
\centering
\renewcommand{\arraystretch}{1.9}
\begin{tabular}{@{}cccc@{}}
\hline
Transformation $T$ & (a) $T$ formula (general $Z$) & (b) $d_p$ \small$(D=\diag(d_p))$ & (c) $J_{T,p}$ \\
\hline
$T_0^*$ &
$D M^{-1}\,\Sigma_{ZY}'\Sigma_Z^{-1}$ &
$1$ &
$M_{pp}$ \\
$T_1^*$ &
$D\,\Sigma_{ZY}'\Sigma_Z^{-1}$ &
$\big((M^2)_{pp}\big)^{-1/2}$ &
$\sqrt{(M^2)_{pp}}=\|M_{\cdot p}\|$ \\
$T_\lambda^*$ &
$D\big((1-\lambda)M^{-1}+\lambda I\big)\Sigma_{ZY}'\Sigma_Z^{-1}$ &
$\big(((1-\lambda)I+\lambda M)^2_{pp}\big)^{-1/2}$ &
$\dfrac{(1-\lambda)M_{pp}+\lambda (M^2)_{pp}}
{\sqrt{(1-\lambda)^2+2\lambda(1-\lambda)M_{pp}+\lambda^2(M^2)_{pp}}}$ \\
$T_{\mathrm{ex}}^*$ &
$D\,\Sigma_{ZY}^{-1}$ &
$\big((M^{-2})_{pp}\big)^{-1/2}$ &
$\big((M^{-2})_{pp}\big)^{-1/2}$ \\
\hline
\end{tabular}
\caption{Summary of the transformations, (a) their formulas,  (b) diagonal normalizers, and (c) per-dimension
 correlations with covariates.}
\label{tab:summaryt}
\end{table*}

\subsubsection{Transformation $T_0^*$} 
\label{seq:deriv_t0}

\begin{enumerate}[label=(\alph*)]
    \item Using \eqref{eq:t0}, we get
    \[T_0^* = \big(\Sigma_{ZY}'\Sigma_Z^{-1}\Sigma_{ZY}\big)^{-1/2}\,\Sigma_{ZY}'\Sigma_Z^{-1} = DM^{-1}\,\Sigma_{ZY}'\Sigma_Z^{-1} .\]
\item There is no scaling in $T_0^*$, thus $D=I$.
    \item The cross-covariance induced by $T_0^*$ is
     \begin{align*}
    \textrm{Cov}(T_0^*Z, Y)
      &= T_0^*\Sigma_{ZY} \\
      &= \big(\Sigma_{ZY}'\Sigma_Z^{-1}\Sigma_{ZY}\big)^{-1/2}
         \Sigma_{ZY}'\Sigma_Z^{-1}\Sigma_{ZY} \\
      &= \big(\Sigma_{ZY}'\Sigma_Z^{-1}\Sigma_{ZY}\big)^{1/2}
       = M.
    \end{align*}
    thus
    \begin{align*}
    J(T_0^*, p)= M_{pp}.
    \end{align*}
\end{enumerate}

\subsubsection{Transformation $T_1^*$} 
\begin{enumerate}[label=(\alph*)]
\item From \eqref{eq:t1}, the optimal rows in the whitened form are
\[
t_p^* = \frac{(\Sigma_{\overline{Z}Y})_{\cdot p}}{\|(\Sigma_{\overline{Z}Y})_{\cdot p}\|}.
\]
Collecting them and using $\Sigma_{\overline{Z}Y} = T_0^*\Sigma_{ZY} = M$ gives
 \[T_1^{*}(\overline{Z}) = D_1\, M,\]
 thus
 \[T_1^* =  T_1^{*}(\overline{Z})\, T_0^* = D_1 MM^{-1}\,\Sigma_{ZY}'\Sigma_Z^{-1} = D_1 \,\Sigma_{ZY}'\Sigma_Z^{-1}. \]
 \item The normalizer $D_1$ scales each row to unit norm. Since
$\|(\Sigma_{\overline{Z}Y})_{\cdot p}\| = \|(T_0^*\Sigma_{ZY})_{\cdot p}\| = \|M_{\cdot p}\|$, and
$\|M_{\cdot p}\|^2 = M_{\cdot p}'M_{\cdot p} = (M^2)_{pp}$ (using $M = M'$), we obtain
\[
D_1 = \diag\!\big(\|M_{\cdot p}\|^{-1}\big) = \diag\!\big((M^2)_{pp}^{-1/2}\big).
\]
 \item The cross-covariance induced by $T_1^*$ is    
 \begin{equation*}
    \textrm{Cov}(T_1^*Z, Y) = T_1^*\Sigma_{ZY} =  D_1\, M\, M,
\end{equation*}
so its $p$-th diagonal entry gives the per-dimension alignment
\[J(T_1^*, p)=\sqrt{(M^2)_{pp}}.\]
\end{enumerate}

\subsubsection{Transformation $T_{\lambda}^*$}
\begin{enumerate}[label=(\alph*)]
\item From \eqref{eq:tlambda}, the optimal rows in the whitened form are
\[
t_{p}^* = \frac{\lambda (\Sigma_{\overline{Z}Y})_{\cdot p} + (1-\lambda) e_p}
{\|\lambda (\Sigma_{\overline{Z}Y})_{\cdot p} + (1-\lambda) e_p\|},
\]
thus
\[
T_{\lambda}^{*}(\overline{Z}) = D_{\lambda}\,\big((1-\lambda)I + \lambda M\big),
\]
and composing with the whitening $T_0^* = M^{-1}\Sigma_{ZY}'\Sigma_Z^{-1}$,
\[
T_{\lambda}^* = T_{\lambda}^{(\overline{Z})*}\,T_0^*
= D_{\lambda}\,\big((1-\lambda)M^{-1} + \lambda I\big)\,\Sigma_{ZY}'\Sigma_Z^{-1}.
\]

\item The diagonal entries of the normalizer $D_{\lambda}$ are
$d_{\lambda,p} = \|((1-\lambda)I + \lambda M)_{\cdot p}\|^{-1}$. Expanding the squared norm,
\begin{align*}
&\big\|((1-\lambda)I + \lambda M)_{\cdot p}\big\|^2 \\
&\quad= \big\|(1-\lambda)e_p + \lambda M_{\cdot p}\big\|^2 \\
&\quad= (1-\lambda)^2 + 2\lambda(1-\lambda)\,\langle e_p, M_{\cdot p}\rangle + \lambda^2\|M_{\cdot p}\|^2 \\
&\quad= (1-\lambda)^2 + 2\lambda(1-\lambda)\,M_{pp} + \lambda^2 (M^2)_{pp},
\end{align*}
using $\langle e_p, M_{\cdot p}\rangle = M_{pp}$ and $\|M_{\cdot p}\|^2 = (M^2)_{pp}$. Hence
\[
d_{\lambda,p} = \Big((1-\lambda)^2 + 2\lambda(1-\lambda)M_{pp} + \lambda^2 (M^2)_{pp}\Big)^{-1/2}.
\]

\item The cross-covariance induced by $T_{\lambda}^*$ is
\[
\cov(T_{\lambda}^* Z, Y) = T_{\lambda}^*\Sigma_{ZY}
= D_{\lambda}\,\big((1-\lambda)I + \lambda M\big)\,M,
\]
so its $p$-th diagonal entry gives the per-dimension alignment
\[
J(T_{\lambda}^*, p)
= \frac{(1-\lambda)M_{pp} + \lambda (M^2)_{pp}}
{\sqrt{(1-\lambda)^2 + 2\lambda(1-\lambda)M_{pp} + \lambda^2 (M^2)_{pp}}}.
\]
\end{enumerate}

\subsubsection{Transformation $T_{\mathrm{ex}}^*$}
\label{seq:deriv_tex}
\begin{enumerate}[label=(\alph*)]
\item From \eqref{eq:optim_excl}, the optimal rows are the unit-normalized
rows of $\Sigma_{\overline{Z}Y}^{-1}$,
\[
t_p^* = \frac{(\Sigma_{\overline{Z}Y}^{-1})_{p\cdot}'}{\|(\Sigma_{\overline{Z}Y}^{-1})_{p\cdot}\|},
\]
which gives
\[
T_{\mathrm{ex}}^{(\overline{Z})*} = D_{\mathrm{ex}}\,M^{-1},
\]
and composing with the whitening $T_0^* = M^{-1}\Sigma_{ZY}'\Sigma_Z^{-1}$,
\begin{align*}
    T_{\mathrm{ex}}^* &= T_{\mathrm{ex}}^{*}(\overline{Z})\,T_0^*
= D_{\mathrm{ex}}\,M^{-1}\,M^{-1}\,\Sigma_{ZY}'\Sigma_Z^{-1} \\
&= D_{\mathrm{ex}}\,M^{-2}\,\Sigma_{ZY}'\Sigma_Z^{-1}
= D_{\mathrm{ex}}\,\Sigma_{ZY}^{-1},
\end{align*}
where the last step uses
\[M^{-2}\Sigma_{ZY}'\Sigma_Z^{-1} = (\Sigma_{ZY}'\Sigma_Z^{-1}\Sigma_{ZY})^{-1}\Sigma_{ZY}'\Sigma_Z^{-1}
= \Sigma_{ZY}^{-1}.\]

\item The normalizer $D_{\mathrm{ex}} = \diag(d_{\mathrm{ex},p})$ is
$d_{\mathrm{ex},p} = \|(M^{-1})_{p\cdot}\|^{-1}$. Since $M$ (and $M^{-1}$) is symmetric,
$\|(M^{-1})_{p\cdot}\|^2 = (M^{-1}M^{-1})_{pp} = (M^{-2})_{pp}$, giving
\[
D_{\mathrm{ex}} = \diag\!\big(\|(M^{-1})_{p\cdot}\|^{-1}\big)
= \diag\!\big((M^{-2})_{pp}^{-1/2}\big).
\]

\item The cross-covariance induced by $T_{\mathrm{ex}}^*$ is
\[
\cov(T_{\mathrm{ex}}^* Z, Y) = T_{\mathrm{ex}}^*\Sigma_{ZY}
= D_{\mathrm{ex}}\,\Sigma_{ZY}^{-1}\Sigma_{ZY} = D_{\mathrm{ex}},
\]
which is diagonal, so the cross-covariance constraint holds and the per-dimension alignment is
\[
J(T_{\mathrm{ex}}^*, p) = \big(D_{\mathrm{ex}}\big)_{pp} = \big((M^{-2})_{pp}\big)^{-1/2}.
\]
\end{enumerate}

\subsection{Proof of Theorem \ref{lemma1}}
\label{sec:proof}

(a) The theorem follows by summing the per-dimension inequalities over $p$. The first inequality,
together with its equality condition, is Lemma~\ref{lem:ex_0}. The remaining two, together with the
monotonicity of $\lambda \mapsto J(T_\lambda^*)$, are Lemma~\ref{lem:monotone_lambda}. In lemmas proofs, we
use the notation and per-dimension representations of Section~\ref{sec:summary_of_t}, in particular
that $M = (\Sigma_{ZY}'\Sigma_Z^{-1}\Sigma_{ZY})^{1/2}$. \\
(b) The theorem follows directly from the problem formulations.

\begin{lemma}
\label{lem:ex_0}
For each $p$,
\[
J_{T_{ex}^*,p} \;\le\; J_{T_0^*,p}
\]
with equality if and only if $M_{pp}^2 = \|M_{\cdot p}\|^2$, i.e. the column $M_{\cdot p}$ has no
off-diagonal entries.
\end{lemma}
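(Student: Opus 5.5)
The plan is to reduce the claim to an inequality about diagonal entries of the symmetric positive definite matrix $M$ and its inverse. By Table~\ref{tab:summaryt}, $J_{T_0^*,p}=M_{pp}$ and $J_{T_{\mathrm{ex}}^*,p}=\big((M^{-2})_{pp}\big)^{-1/2}$. Both quantities are positive because $M\succ 0$ under the standing assumptions. So the claim is equivalent to
\[
M_{pp}^2\,(M^{-2})_{pp}\;\ge\;1 .
\]
I will prove this by chaining two Cauchy--Schwarz inequalities through the intermediate quantity $(M^{-1})_{pp}$.

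\emph{Step 1.} Bound $(M^{-2})_{pp}$ below by $\big((M^{-1})_{pp}\big)^2$. Since $M^{-1}$ is symmetric, $(M^{-2})_{pp}=\|M^{-1}e_p\|^2$. Cauchy--Schwarz applied to $e_p$ and $M^{-1}e_p$ gives
\[
(M^{-1})_{pp}=e_p'M^{-1}e_p\le\|M^{-1}e_p\|,
\]
and squaring yields the bound.

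\emph{Step 2.} Show $M_{pp}\,(M^{-1})_{pp}\ge 1$. Using the symmetric square root $M^{1/2}\succ 0$,
\[
1=e_p'e_p=(M^{1/2}e_p)'(M^{-1/2}e_p)\le \|M^{1/2}e_p\|\,\|M^{-1/2}e_p\|=\sqrt{M_{pp}\,(M^{-1})_{pp}} .
\]
Combining the two steps gives $M_{pp}^2(M^{-2})_{pp}\ge M_{pp}^2\big((M^{-1})_{pp}\big)^2\ge 1$, which is the inequality.

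\emph{Step 3 (equality).} This is the only delicate part. Equality in Step~1 forces $M^{-1}e_p$ to be collinear with $e_p$. Hence $e_p$ is an eigenvector of $M^{-1}$, and therefore of $M$. That means $Me_p=M_{\cdot p}=M_{pp}e_p$, so the column $M_{\cdot p}$ has no off-diagonal entries, equivalently $\|M_{\cdot p}\|^2=M_{pp}^2$.

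Conversely, suppose $M_{\cdot p}=M_{pp}e_p$. Then $e_p$ is an eigenvector of $M$ with eigenvalue $M_{pp}$, so it is also an eigenvector of $M^{-1}$ with eigenvalue $M_{pp}^{-1}$. It follows that $(M^{-2})_{pp}=M_{pp}^{-2}$, and equality holds.

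Equality in Step~2 is then automatic, since $e_p$ is also an eigenvector of $M^{\pm1/2}$. So the condition $M_{pp}^2=\|M_{\cdot p}\|^2$ is exactly the equality case. The main care needed is to carry the eigenvector argument in both directions, which uses the symmetry of $M$ to identify columns with rows.
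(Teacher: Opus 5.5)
Your proposal is correct and follows the paper's proof essentially step for step: the same two Cauchy--Schwarz inequalities chained through $(M^{-1})_{pp}$, and the same eigenvector characterization of the equality case. For the ``if'' direction you compute $(M^{-2})_{pp}=M_{pp}^{-2}$ directly from $Me_p=M_{pp}e_p$, which is slightly more explicit than the paper's remark that the same condition makes both steps tight.
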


\begin{proof}
Since $M$ is symmetric positive definite, the two per-dimension values are
$J_{T_0^*,p} = M_{pp}$ and $J_{T_{ex}^*,p} = \big((M^{-2})_{pp}\big)^{-1/2}$, so the claim is
\[
\big((M^{-2})_{pp}\big)^{-1/2} \;\le\; M_{pp}.
\]
We prove it through the intermediate quantity $\big((M^{-1})_{pp}\big)^{-1}$, using the
Cauchy-Schwarz inequality twice.

\emph{Step 1: $\big((M^{-2})_{pp}\big)^{-1/2} \le \big((M^{-1})_{pp}\big)^{-1}$.}
Apply Cauchy-Schwarz to $e_p$ and $M^{-1}e_p$:
\[
(M^{-1})_{pp} = \langle e_p,\, M^{-1}e_p\rangle
\le \|e_p\|\,\|M^{-1}e_p\|
= \big((M^{-2})_{pp}\big)^{1/2},
\]
using $\|e_p\| = 1$ and $\|M^{-1}e_p\|^2 = e_p'M^{-2}e_p = (M^{-2})_{pp}$. Squaring and inverting
gives the claim.

\emph{Step 2: $\big((M^{-1})_{pp}\big)^{-1} \le M_{pp}$.}
Apply Cauchy-Schwarz to $M^{1/2}e_p$ and $M^{-1/2}e_p$:
\begin{multline*}
    1 = \langle e_p, e_p\rangle
= \langle M^{1/2}e_p,\, M^{-1/2}e_p\rangle
\\\le \|M^{1/2}e_p\|\,\|M^{-1/2}e_p\|
= M_{pp}^{1/2}\,(M^{-1})_{pp}^{1/2},
\end{multline*}
so $M_{pp}\,(M^{-1})_{pp} \ge 1$, i.e. $\big((M^{-1})_{pp}\big)^{-1} \le M_{pp}$.

Chaining the two steps,
\[
J_{T_{ex}^*,p} = \big((M^{-2})_{pp}\big)^{-1/2}
\le \big((M^{-1})_{pp}\big)^{-1}
\le M_{pp} = J_{T_0^*,p}.
\]
Equality in Step~1 holds iff $M^{-1}e_p \parallel e_p$, i.e. $e_p$ is an eigenvector of $M$;
the same condition makes Step~2 an equality. This is equivalent to $M_{\cdot p}$ having no
off-diagonal entries, i.e. $M_{pp}^2 = \|M_{\cdot p}\|^2$, and otherwise both inequalities are strict.
\end{proof}

\begin{lemma}
\label{lem:monotone_lambda}
The map $\lambda \mapsto J_{T_\lambda^*,p}$ is non-decreasing on $[0,1]$, thus
\[
J_{T_0^*,p} \;\le\; J_{T_\lambda^*,p} \;\le\; J_{T_1^*,p}, \qquad \lambda \in [0,1].
\]
The inequalities are strict unless $\lambda \in \{0,1\}$ or $M_{pp}^2 = \|M_{\cdot p}\|^2$, i.e. unless
the column $M_{\cdot p}$ has no off-diagonal entries.
\end{lemma}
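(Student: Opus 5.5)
Set $a \coloneq M_{\cdot p}$ (the $p$-th whitened cross-covariance column), $m \coloneq M_{pp} = \langle e_p, a\rangle$ and $r \coloneq \|a\| = \sqrt{(M^2)_{pp}}$. By Section~\ref{sec:summary_of_t},
\[
J_{T_\lambda^*,p} = \frac{\langle u_\lambda, a\rangle}{\|u_\lambda\|}, \qquad u_\lambda \coloneq (1-\lambda)e_p + \lambda a,
\]
so $J_{T_\lambda^*,p}$ is the length of the projection of $a$ onto the unit vector $u_\lambda/\|u_\lambda\|$. Since $M \succ 0$, we have $m = e_p'Me_p > 0$. Cauchy--Schwarz gives $m \le r$, with equality iff $a \parallel e_p$, i.e. iff $M_{\cdot p}$ has no off-diagonal entries.

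The plan is to work in the two-dimensional plane spanned by $e_p$ and $a$. As $\lambda$ runs from $0$ to $1$, the direction $u_\lambda$ moves along the segment from $e_p$ to $a$. Because $m>0$, the angle between $e_p$ and $a$ is acute. The angle $\theta(\lambda)$ between $u_\lambda$ and $a$ therefore decreases monotonically from $\angle(e_p,a)$ to $0$, and $J_{T_\lambda^*,p} = r\cos\theta(\lambda)$ is non-decreasing.

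To make this rigorous I would differentiate $g(\lambda) = \langle u_\lambda,a\rangle^2/\|u_\lambda\|^2$ directly. Note that $\langle u_\lambda,a\rangle = (1-\lambda)m + \lambda r^2 > 0$, so monotonicity of $g$ is the same as monotonicity of $J$. Writing $N(\lambda) = \langle u_\lambda,a\rangle$ and $D(\lambda) = \|u_\lambda\|^2$, we have $N' = r^2 - m$ and $D' = 2\langle a-e_p, u_\lambda\rangle$. Hence
\[
\operatorname{sign} g' = \operatorname{sign}\bigl(2N'D - N D'\bigr) = \operatorname{sign}\bigl(\langle a - e_p, \, D\,a - N\,u_\lambda\rangle\bigr).
\]
Since $u_\lambda - e_p = \lambda(a - e_p)$, I would rewrite this expression through the Gram determinant. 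The key step is to verify that
\[
2N'D - N D' = 2(1-\lambda)\,N\,\bigl(r^2 - m^2\bigr)/\,\text{(something positive)}.
\]
Concretely, I expect the numerator of $\frac{d}{d\lambda}\bigl(N/\sqrt D\bigr)$ to equal $(1-\lambda)\,\bigl(\|e_p\|^2\|a\|^2 - \langle e_p,a\rangle^2\bigr)\cdot c$ for some $c>0$ (likely $c = N$), up to the factor $D^{3/2}$. Checking this polynomial identity in $\lambda$, $m$ and $r^2$ is the main computational step. I would do it by expanding $N'D - \tfrac12 N D'$ with $D = (1-\lambda)^2 + 2\lambda(1-\lambda)m + \lambda^2 r^2$ and collecting terms. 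Once it holds, $g' \ge 0$ follows from $r^2 - m^2 \ge 0$ (Cauchy--Schwarz) together with $N > 0$.

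With monotonicity established, the inequalities $J_{T_0^*,p} \le J_{T_\lambda^*,p} \le J_{T_1^*,p}$ follow by evaluating at $\lambda = 0$ and $\lambda = 1$; there $u_0 = e_p$ and $u_1 = a$ reproduce $M_{pp}$ and $\|M_{\cdot p}\|$. For strictness: if $r^2 > m^2$, the derivative is strictly positive on $(0,1)$, so for $\lambda \in (0,1)$ both inequalities are strict. If $r^2 = m^2$, then $a = m e_p$, so $u_\lambda \parallel e_p$ and $J$ is constant in $\lambda$. Finally, summing over $p$ and combining with Lemma~\ref{lem:ex_0} gives part (a) of Theorem~\ref{lemma1}, including the simultaneous-equality condition that $M$ be diagonal.
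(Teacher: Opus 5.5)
Your proposal is correct and is essentially the paper's proof. Both arguments differentiate the closed form of $J_{T_\lambda^*,p}$ (you work with $g=J^2$, which is equivalent because $N>0$), reduce the sign of the derivative to $(1-\lambda)\bigl(\|M_{\cdot p}\|^2-M_{pp}^2\bigr)\ge 0$, and then read off the endpoint bounds and strictness.

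One correction to your guessed constant: $c=1$, not $N$. The exact identity is $N'D-\tfrac12 ND'=(1-\lambda)(r^2-m^2)$, and your Gram-determinant route gives it directly, since for $\lambda<1$ one has $a-e_p=(a-u_\lambda)/(1-\lambda)$ and $\|u_\lambda\|^2r^2-\langle u_\lambda,a\rangle^2=(1-\lambda)^2(r^2-m^2)$. The factor $N$ therefore appears only in $g'$.
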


\begin{proof}
From Section~\ref{sec:summary_of_t}, the per-dimension alignment is
\[
J_{T_\lambda^*,p}
= \frac{\lambda\,\|M_{\cdot p}\|^2 + (1-\lambda)\,M_{pp}}
{\sqrt{\lambda^2\|M_{\cdot p}\|^2 + 2\lambda(1-\lambda)M_{pp} + (1-\lambda)^2}}.
\]
Writing this as $u(\lambda)\,w(\lambda)^{-1/2}$ with numerator $u$ and
$w = \lambda^2\|M_{\cdot p}\|^2 + 2\lambda(1-\lambda)M_{pp} + (1-\lambda)^2$, we have
$\tfrac{d}{d\lambda}J_{T_\lambda^*,p} = w^{-3/2}\big(u'w - \tfrac12 u w'\big)$. Expanding
$u'w - \tfrac12 u w'$ cancels all terms except $(1-\lambda)(\|M_{\cdot p}\|^2 - M_{pp}^2)$, giving
\[
\frac{d}{d\lambda}J_{T_\lambda^*,p}
= \frac{(1-\lambda)\big(\|M_{\cdot p}\|^2 - M_{pp}^2\big)}
{\big(\lambda^2\|M_{\cdot p}\|^2 + 2\lambda(1-\lambda)M_{pp} + (1-\lambda)^2\big)^{3/2}}.
\]
Both factors in the numerator are non-negative on $[0,1]$: $1-\lambda \ge 0$, and
$M_{pp}^2 \le \sum_k M_{kp}^2 = \|M_{\cdot p}\|^2$ since $M_{pp}$ is one entry of the column
$M_{\cdot p}$. Hence $\tfrac{d}{d\lambda}J_{T_\lambda^*,p} \ge 0$, and the endpoint values
$J_{T_0^*,p} = M_{pp}$ and $J_{T_1^*,p} = \|M_{\cdot p}\|$ give the lower and upper bound, respectively. The derivative
vanishes only at $\lambda = 1$ or when $\|M_{\cdot p}\|^2 = M_{pp}^2$, which yields the strictness
condition.
\end{proof}

\section{Additional model details}
Let \(m=1,\ldots,M\) index the modalities in the multi-modal dataset. 
For each modality \(m\), we denote the observed data matrix by 
\(\mathbf{X}^{(m)}\), the corresponding loading matrix by 
\(\mathbf{W}^{(m)}\), and the modality-specific parameters by 
\(\boldsymbol{\alpha}^{(m)}\) and \(\boldsymbol{\tau}^{(m)}\). 
The latent representation \(\tilde{\mathbf{Z}}\) is shared across all modalities. 
The joint distribution factorizes as
\begin{align}
\label{eq:multimodal}
    &p(\{\mathbf{X}^{(m)}\}_{m=1}^{M}, \tilde{\mathbf{Z}}, \{\mathbf{W}^{(m)}\}_{m=1}^{M}, \{\alpha^{(m)}\}_{m=1}^{M}, \{\tau^{(m)}\}_{m=1}^{M} \mid\mathbf{Y}, \beta) = \nonumber \\ \nonumber
    &\quad \prod_{n=1}^{N}\prod_{m=1}^{M}\prod_{d=1}^{D} \mathcal{N}\bigl(x_{n,d}^{(m)}\mid \tilde{z}_{n,\cdot} (w_{d,\cdot}^{(m)})',\, 1/\tau_d^{(m)}\bigr) \\ \nonumber
    &\quad \prod_{n=1}^{N} \prod_{p=1}^{P} \mathcal{N}(z_{n,p}\mid\beta_p^{(0)} + \beta_p y_{n,p}, 1-\beta_p^2)\prod_{n=1}^{N} \prod_{k=P+1}^{K} \mathcal{N}(z_{n,k}\mid 0,1) \\ \nonumber
    &\quad \prod_{m=1}^{M}\prod_{d=1}^{D} \prod_{k=1}^{K} \mathcal{N}(w_{d,k}^{(m)} \mid 0, 1/\alpha_k^{(m)})\prod_{m=1}^{M}\prod_{k=1}^{K}\mathcal{G}(\alpha_k^{(m)} \mid a_0^{(\alpha)}, b_0^{(\alpha)}) \\ 
    &\quad  \prod_{m=1}^{M}\prod_{d=1}^{D} \mathcal{G}(\tau_d^{(m)} \mid a_0^{(\tau)}, b_0^{(\tau)}).
\end{align}

The graphical model for the uni-modal case is shown in Fig.~\ref{fig:PGM}.

\begin{figure}[t]
\centering
\begin{tikzpicture}
  % --- Parameters (rectangles, on the left) ---
  \node[const] (beta) at (-2, 4)   {$\beta_p,\,\beta_p^{(0)}$};
  \node[const] (T)    at (-2, 2.5) {$T_\lambda$};

  % --- Top: y_{n,p} ---
  \node[obs] (y) at (-0.5, 5) {$y_{n,p}$};

  % --- Level 2: z_{n,p} and z_{n,k} ---
  \node[latent] (zp) at (-0.5, 3.5) {$z_{n,p}$};
  \node[latent] (zk) at ( 1.5, 3.5) {$z_{n,k}$};

  % --- Level 3: tilde z, plus alpha and w on the right ---
  \node[det]    (ztp)   at (-0.5, 2)   {$\tilde z_{n,p}$};
  \node[det]    (ztk)   at ( 1.5, 2)   {$z_{n,k}$};
  \node[latent] (alpha) at ( 4.5, 3.5) {$\alpha_k$};
  \node[latent] (w)     at ( 4.5, 2)   {$w_{d,k}$};

  % --- Between tilde z / w and x: tau ---
  \node[latent] (tau) at (2.5, 1.5) {$\tau_d$};

  % --- Bottom: x_{n,d} ---
  \node[obs] (x) at (2.5, 0.5) {$x_{n,d}$};

  % --- Edges ---
  \edge {beta}  {zp};
  \edge {y}     {zp};
  \edge {zp,T}  {ztp};
  \edge {zk}  {ztk};
  \edge {ztp}   {x};
  \edge {ztk}   {x};
  \edge {alpha} {w};
  \edge {w}     {x};
  \edge {tau}   {x};

  % --- Plates (innermost first) ---
  \plate[xshift=-3pt,yshift=3pt, label={[anchor=south west, xshift=2pt, yshift=1pt]south west:{$p=1,\ldots,P$}}] {pP}  {(y)(zp)(ztp)(beta)}              {};
  \plate[xshift=-1pt,yshift=7pt, label={[anchor=north west, xshift=0pt, yshift=0pt]north west:{$k=P+1,\ldots,K$}}] {pKP} {(zk)(ztk)}                 {\textcolor{white}{$k=P+1,\ldots,K$}};
  \plate[xshift=0pt,yshift=-2pt, label={[anchor=south west, xshift=0pt, yshift=0pt]south west:{$k=1,\ldots,K$}}] {pK}  {(alpha)(w)}                {\textcolor{white}{$k=1,\ldots,K$}};
  \plate[xshift=2pt,yshift=0pt] {pD}  {(tau)(w)(x)}               {$d=1,\ldots,D$};
  \plate[scale=1.01, xshift=-3pt,yshift=-2pt, label={[anchor=south west, xshift=2pt, yshift=0pt]south west:{$n=1,\ldots,N$}}] {pN}  {(y)(zp)(ztp)(zk)(ztk)(x)}  {};
\end{tikzpicture}
\caption{Graphical model for iFA (uni-modal case).}
\label{fig:PGM}
\end{figure}

\section{Additional details on the numerical experiments}
\begin{table*}[h]
\centering
\begin{tabular}{c|c|c|c}
Scenario 1 & Scenario 2 & Scenario 3 & Scenario 4 \\
\textbf{Positive \& negative (PN)} & \textbf{Autoregressive (AR)}  & \textbf{Positive (P)} & \textbf{Negative (N)}  \\ \hline 
% --- Row 1 (text / placeholders) ---
- & $(\Sigma_Y)_{ij} = (0.6)^{|i-j|}$ & $(\Sigma_Y)_{ij} =
\begin{cases}
1 & \text{if } i = j \\
0.25 & \text{if } i \ne j
\end{cases}$ & $(\Sigma_Y)_{ij} =
\begin{cases}
1 & \text{if } i = j \\
-0.25 & \text{if } i \ne j
\end{cases}$ \\

% --- Row 2 (FIGURES) ---
\includegraphics[width=0.18\textwidth]{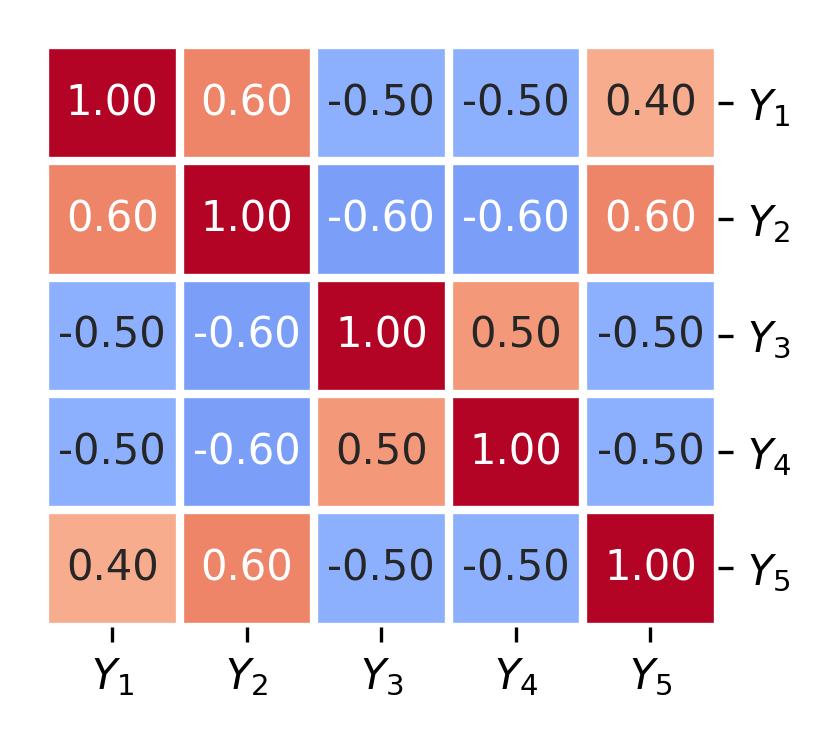} &
\includegraphics[width=0.18\textwidth]{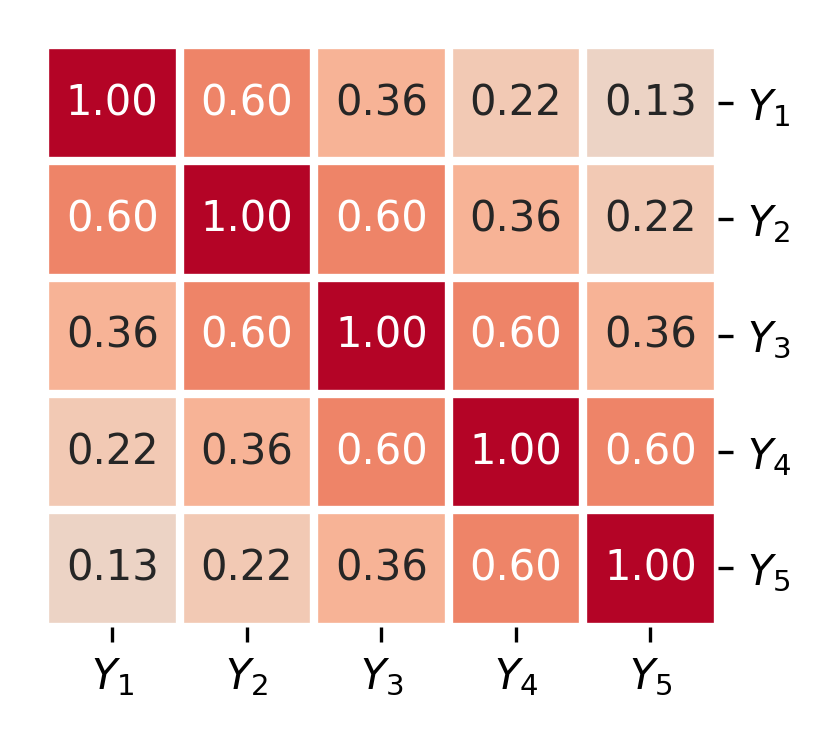} &
\includegraphics[width=0.18\textwidth]{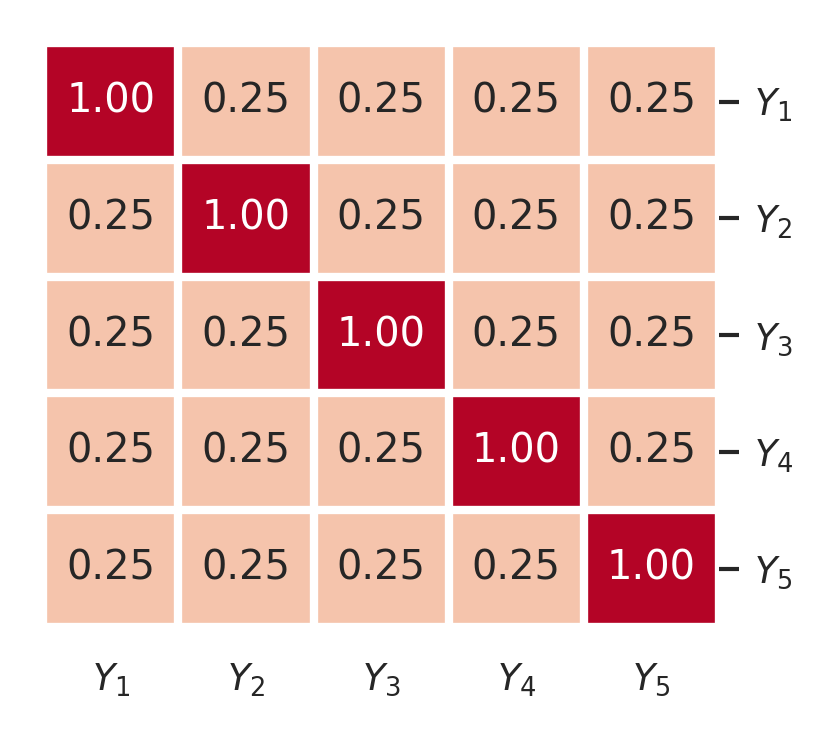} &
\includegraphics[width=0.18\textwidth]{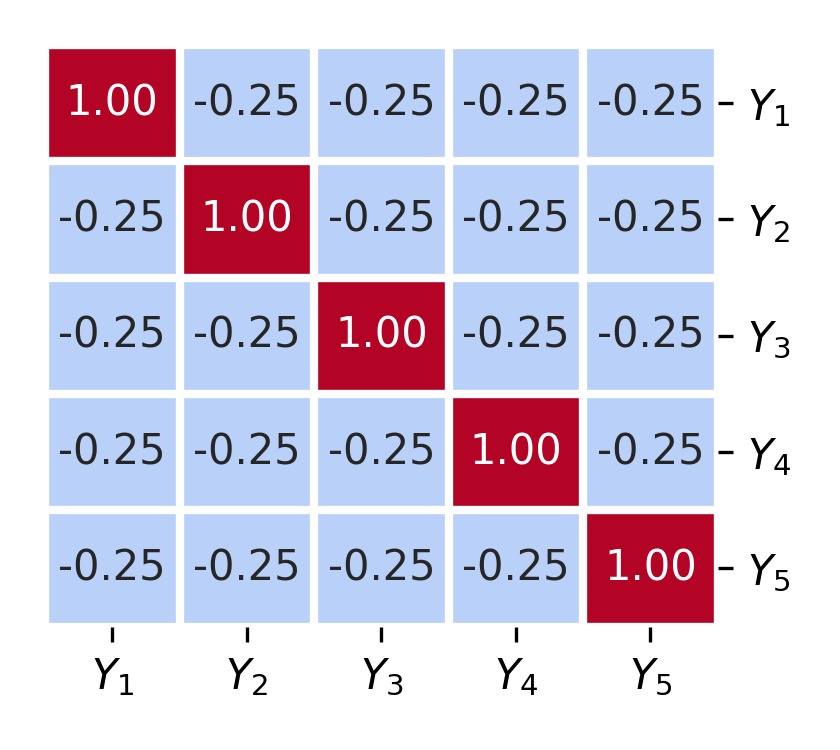} \\

% --- Row 3 (captions or metrics) ---
Unstructured dependencies & Time/space dependences & Scale effects & Dummy-coded categories \\
\end{tabular}
\caption{Simulation scenarios with different covariate dependency structures}
\label{tab:sup_scenarios}
\end{table*}

\subsection{Simulation scenarios}

We generate synthetic data from the generative model of Eq.~\eqref{eq:joint_likelihood}
(Fig.~\ref{fig:PGM}) with $T = I$, varying the covariance structure
of the covariates to reflect qualitatively different but realistic dependence
patterns. The four scenarios with different covariate-covariance structure are summarized in Tab.~\ref{tab:sup_scenarios}, and
the parameters held fixed across all of them in Tab.~\ref{tab:sup_sim_par}. In simulations, we vary two parameters
(Tab.~\ref{tab:sup_vary_par}): $\alpha \in [0, 1]$ controls the strength of the
covariate correlations by interpolating the covariance matrix toward the
identity, and $b \in
[0, 1]$ scales the factor-covariate coefficient vector controlling the strength of the pairwise latent-covariate
dependence.

\begin{table}[h]
\centering
\begin{tabular}{l|l|l}
\textbf{Parameter} & \textbf{Description} & \textbf{Value} \\
\hline
$N$ & number of observations & $500$ \\
D & number of features & 100 \\
$K$ & number of factors & 10 \\
$\beta$ & a vector of coefficients & $(0.9, 0.75, 0.6, 0.45, 0.3)$ \\
$\theta$ & noise to signal ratio & $0.1$ \\
\end{tabular}
\caption{Simulation parameter settings}
\label{tab:sup_sim_par}
\end{table}

\begin{table*}[h]
\centering
\begin{tabularx}{\linewidth}{l X l}
\hline
\textbf{Parameter} & \textbf{Description} & \textbf{Values} \\
\hline
$\alpha \in [0,1]$
&
Controls the strength of covariate dependences.
The covariance matrix is transformed as
${\Sigma_Y^{(\alpha)} = \alpha \Sigma_Y + (1-\alpha)I}$,
with smaller values yielding weaker correlations ($0$ - no correlations, $1$ - strongest correlations).
&
$0,\,0.25,\,0.5,\,0.75,\,1$
\\
$b \in [0,1]$
&
Controls the strength of latent-covariate dependence by scaling
the coefficient vector:
$\beta^{(b)} = b\beta$; the smallest $b$ is selected so that the lowest $\beta_p = 0.1$, then $b$ are spread equilly ($1/3$ - smallest factor-covariate dependence $0.2$ on average, $1$ strongest factor covariate dependence $0.6$ on average)  
&
$1/3,\,1/2,\,2/3,\,3/4,\,1$
% \\
% $\theta > 0$
% &
% Noise-to-signal ratio.
% Larger values correspond to noisier observations.
% &
% $0.01,\,0.05,\,0.1,\,0.15,\,0.2$
\\
\hline
\end{tabularx}
\caption{Simulation parameters varied across experiments.}
\label{tab:sup_vary_par}
\end{table*}

\subsection{Metrics}
\label{sec:sup_metrics}
Let $Z = (z_1, \ldots, z_K)$ denote the latent representation and $Y = (y_1, \ldots, y_P)$ the observed covariates. For each covariate $y_p$, we fit a regressor predicting $y_p$ from $Z$ and obtain a per-latent importance vector; collecting these across $p$ yields the importance matrix $R \in \mathbb{R}^{K \times P}$, where $R_{kp}$ measures the contribution of $z_k$ to predicting $y_p$. On sample data, following~\cite{eastwood2018a}, we set $R_{kp}$ to the absolute Lasso coefficient. For population-level metrics, we exploit the joint covariance of $Z$ and $Y$ and use the closed-form OLS coefficients: $R_{kp} = |(\Sigma_Z^{-1} \Sigma_{ZY})_{kp}|$. We compute these metrics only for the informed (aligned with covariates) latent dimensions.
\begin{itemize}
    \item \textbf{Disentanglement (D)} measures the degree to which each latent dimension $z_k$ encodes information about a single covariate. A latent that contributes to predicting only one $y_p$ is fully disentangled; a latent that contributes to many is entangled. Formally,
    \begin{align*}
        & D = \sum_{k=1}^{K} \rho_k\, D_k, \quad
        D_k = 1 - \frac{H(\tilde{R}_{k,\cdot})}{\log P}, \\
        & \tilde{R}_{kp} = \frac{R_{kp}}{\sum_{q=1}^{P} R_{kq}}, \quad
        \rho_k = \frac{\sum_p R_{kp}}{\sum_{k', p'} R_{k'p'}},
    \end{align*}
    where $H(\cdot)$ is the Shannon entropy of the row-normalised importance vector. A high $D$ means each latent is specialised to one factor and a low $D$ means the latents are entangled across factors. \cite{eastwood2018a}

    \item \textbf{Completeness (C)} \\
    Measures the degree to which each ground-truth factor is captured by a single latent. Formally,
    \begin{align*}
        & C = \sum_{p=1}^{P} \omega_p\, C_p, \quad
        C_p = 1 - \frac{H(\tilde{R}_{\cdot, p})}{\log K}, \\
        & \tilde{R}_{kp} = \frac{R_{kp}}{\sum_{k'=1}^{K} R_{k'p}}, \quad
        \omega_p = \frac{\sum_k R_{kp}}{\sum_{k', p'} R_{k'p'}}.
    \end{align*}
    A high $C$ means each factor is concentrated in one latent, whereas a low $C$ means a factor's information is spread across many latents. \cite{eastwood2018a}

    \item \textbf{Informativeness (I)} measures whether the latent representation actually contains the information needed to recover the ground-truth factors - independent of how that information is organised. Computed as the mean out-of-sample predictive performance of the regressors:
    \begin{equation*}
        I = \frac{1}{P} \sum_{p=1}^{P} R^2(\hat{y}_p, y_p),
    \end{equation*}
    where $\hat{y}_p$ is the prediction of $y_p$ from $Z$. A high $I$ indicates the latents jointly preserve covariate information and a low $I$ indicates information loss. \cite{eastwood2018a}

    \item \textbf{Separated Attribute Predictability (SAP)} \\
    For each pair $(z_k, y_p)$, fit a univariate regression $\hat{y}_p = a_{kp} z_k + b_{kp}$ and compute the predictive score $S_{kp} = R^2(\hat{y}_p, y_p)$. For each ground-truth factor, compute the gap between the highest and second-highest single-feature score, and average over factors:
    \begin{equation*}
        \mathrm{SAP} = \frac{1}{P} \sum_{p=1}^{P} \left( S_{k^{(1)}_p, p} - S_{k^{(2)}_p, p} \right),
    \end{equation*}
    where $k^{(1)}_p$ and $k^{(2)}_p$ denote the latents with the highest and second-highest scores for factor $p$. A high SAP means a single latent stands out as the dominant predictor of each ground-truth factor, and a low SAP means several latents share predictive power for the same factor. Unlike $D$ and $C$, which use the joint-regression importance matrix, SAP measures single-feature predictability and is consequently more sensitive to correlations among the ground-truth factors. \cite{kumar2018variational}
     \item \textbf{Normalised Separated Attribute Predictability (nSAP)} \\
    A scale-invariant variant of SAP, in which the gap is divided by the top predictor's score:
    \begin{equation*}
        \mathrm{nSAP} = \frac{1}{P} \sum_{p=1}^{P}  \left(S_{k^{(1)}_p, p} - S_{k^{(2)}_p, p}\right)/S_{k^{(1)}_p, p}.
    \end{equation*}
    Unlike SAP, which reports the absolute gap and therefore mixes together the strength of the dominant predictor with its lead over the runner-up, nSAP isolates the \emph{relative} dominance of the best predictor. This is particularly useful when overall predictive scores are low: a small SAP gap may reflect either a weak dominant predictor or genuinely competing predictors, whereas nSAP disentangles these two cases.
\end{itemize}

\subsection{Methods details}
Tab.~\ref{tab:sup_methods} lists the methods we compare against, together with
the packages and hyperparameters used. To keep the comparison fair, every method
is given the same number of factors and the same optimisation budget: $K=10$ and
$2000$ iterations (or epochs/SVI steps) for the
simulations, and $K=20$ and $5000$ iterations for the real data. Methods whose
factors are not paired with the covariates by construction (PCA, MOFA, PLS, both
supervised PCA variants, and iVAE) have their informed factors selected by
Hungarian matching on $|\mathrm{corr}(Z,Y)|$, with signs aligned. SOFA and the
regression baseline are paired by design and need no matching. 

Our method (iFA) is fitted with the same $K$ and iteration budget: variational
inference for $2000$ ($5000$ for the real data) iterations, preceded by $250$
($1000$) pretraining iterations of the unsupervised model, and stopped early when
the relative ELBO change falls below $5\times10^{-7}$. The fine-tuning step is run
at $\lambda \in \{0, 0.25, 0.5, 0.75, 1\}$ for the simulations, and $\lambda \in \{0, 0.25, 0.5, 0.75, 0.9\}$ for real data.

\subsection{Image representations}
\label{sec:datasets_images}

 Represenations are extracted with three pretrained
models  obtained through the
HuggingFace \texttt{transformers}:

\begin{itemize}
    \item \textbf{ViT}~\cite{dosovitskiy2021an}, \\
    \url{https://huggingface.co/google/vit-base-patch16-224},
    \item \textbf{CLIP}~\cite{Radford2021LearningTV}, \\ \url{https://huggingface.co/openai/clip-vit-large-patch14},
    \item \textbf{DINOv2}~\cite{oquab2024dinov}, \\ \url{https://huggingface.co/facebook/dinov2-base}.
\end{itemize}
For ViT and DINOv2 we take the model's pooled output, for CLIP we take the
projected image features, each of dimension $768$.

\section{Supplementary figures}

This section collects the complete set of results across all simulation
scenarios, the pretrained representations, and the runtime comparison. The main
text reports Scenarios~1 (PN) and~4 (N). For completeness, some panels shown in
the main text are repeated here:
\begin{itemize}
    \item Fig.~\ref{fig:sup_sim1_fig1}, panels~A and~B also shown in
    Fig.~\ref{fig:fig1};
    \item Fig.~\ref{fig:sup_sim1_fig1} and Fig.~\ref{fig:sup_sim4_fig1},
    right panel of~D also shown in Fig.~\ref{fig:sim_res}.
\end{itemize}

Figure~\ref{fig:sup_emb} shows the latent-covariate and latent-latent
covariance matrices for CLIP, ViT, and DINOv2 under the two baselines and the
proposed transformations.

Figures~\ref{fig:sup_sim1_fig1}-\ref{fig:sup_sim4_fig1} report, for each of the four
scenarios, the data-generating covariance matrices (panel~A), the covariance
matrices after each transformation (panel~B), the trade-off traced by the family
of transformations (panel~C), and the empirical comparison of all methods under
two parameter settings (panel~D).

Figures~\ref{fig:sup_sim1_fig2}-\ref{fig:sup_sim4_fig2} report, for each
scenario, the three proposed metrics (average factor-covariate correlation,
factor independence, and average variance explained per factor) and the five
disentanglement metrics ($D$, $C$, $I$, SAP, nSAP), varying the strength of the
covariate correlations ($\alpha$, panel~A) and of the factor-covariate
associations ($b$, panel~B).

Figure~\ref{fig:sup_consistency} reports parameter recovery as the number of
observations grows: the estimated coefficients, their mean squared error, and
the cosine similarity between recovered and true informed factors.

Figure~\ref{fig:sup_time} compares the runtime of all methods.

\begin{table*}
\centering
\begin{tabularx}{\linewidth}{l X L}
\hline
\textbf{Name} & \textbf{Description} & \textbf{Implementation details} \\
\hline
PCA & \textbf{Principal Component Analysis.} An unsupervised linear method extracting orthogonal directions that maximise variance in $X$. \cite{Pearson01111901, Hotelling1933-gl} & \texttt{PCA(n\_components=10/20)} - \texttt{sklearn 1.8.0} \cite{sklearn} \\
\addlinespace
MOFA & \textbf{Multi-Omicss Factor Analysis} A probabilistic multi-view factor analysis model with a mean-field variational approximation that approximates factor independence. \cite{mofa} &
\texttt{mofapy2 0.7.4} \par
model options: \texttt{factors=10/20, spikeslab\_weights=True, ard\_weights=True,
ard\_factors=True}, \par train options: \texttt{convergence\_mode="medium", iter=2000/5000}\\
\addlinespace
PLS
&
\textbf{Partial Least Squares.}  Extracts latent directions that maximise the covariance between $X$ and $Y$, subject to orthogonality of successive components. \cite{Wold_1975}
& \texttt{PLSRegression(n\_components=10/20)} - \texttt{sklearn 1.8.0} \cite{sklearn}
\\
\addlinespace
SPCA (Bair 2006) &
    \textbf{Screening-based supervised PCA.} A two-step method that first screens features of $X$ by their univariate association with $Y$ (e.g., absolute correlation coefficient above a threshold), then applies standard PCA to the surviving submatrix. The factors are PCA directions of a covariate-relevant subset of features. & Our implementation: features filtered by absolute correlation above the 
 $0.5$ empirical quantile treshold, then \texttt{PCA(n\_components=10/20)} - \texttt{sklearn 1.8.0} \cite{sklearn} \\
 \addlinespace
SPCA (Barshan 2011) &  \textbf{HSIC-based supervised PCA.} Finds directions in $X$ maximising the Hilbert-Schmidt Independence Criterion with $Y$. The objective reduces to a generalised eigenproblem on $X' H L H X$, where $L$ is the $Y$-kernel matrix and $H = I - \frac{1}{n} \mathbf{1}\mathbf{1}'$ is the centring matrix. &  Our implementation: RBF kernel on $Y$ with median bandwidth, linear kernel on $X$, \par
\texttt{scipy 1.17.1} and \texttt{numpy 2.4.6}\\
\addlinespace
iVAE  &
    \textbf{Identifiable VAE.} A variational autoencoder where the prior over $Z$ is conditioned on the auxiliary variables $Y$ via a factorised exponential family, $p(z \mid y) = \prod_p p(z_p \mid y)$. Trained by maximising the conditional ELBO with both an encoder $q(z \mid x, y)$ and a decoder $p(x \mid z)$. \cite{ivae} & Our implementation: \texttt{PyTorch 1.13.1+cu117}, Gaussian variant\par
    encoder, prior and decoder MLPs with $2\times64$ hidden units, \texttt{LeakyReLU}$(0.1)$; Adam, lr$=10^{-3}$, batch$=256$, epochs=$2000$/$5000$. \\
    \addlinespace
SOFA & \textbf{Semi-supervised Factor Analysis} An extension of MOFA that models pairwise factor-covariate dependencies. A scaling factor (\texttt{sf}) reweights the factor-covariate loss term relative to the data-factor terms, providing indirect control over the factor-covariate dependence strength. \cite{sofa} & \texttt{biosofa 0.7.5}, $K=10/20$, llh=\texttt{'gaussian', 'bernoulli'}, n\_steps=$2000$/$5000$, lr=$0.01$ \par \texttt{sf} $\in \{0.01, \dots, 0.05, 0.1\}$. \\
    \addlinespace
Y as factors
&
 \textbf{Regression.} A supervised baseline that takes the covariates themselves as the first $P$ informed factors and fills the remaining $K-P$ slots with the leading principal components of the residual $X - Y \hat{\beta}_y$, where $\hat{\beta}_y$ is the OLS fit of $X$ on $Y$.
&
\texttt{LinearRegression}, \texttt{PCA} - \texttt{sklearn 1.8.0}
\\
\hline
\end{tabularx}
\caption{Comparison methods, with a brief description and implementation details.}
\label{tab:sup_methods}
\end{table*}

\newpage

\begin{figure*}
    \centering
    \includegraphics[width=0.95\linewidth]{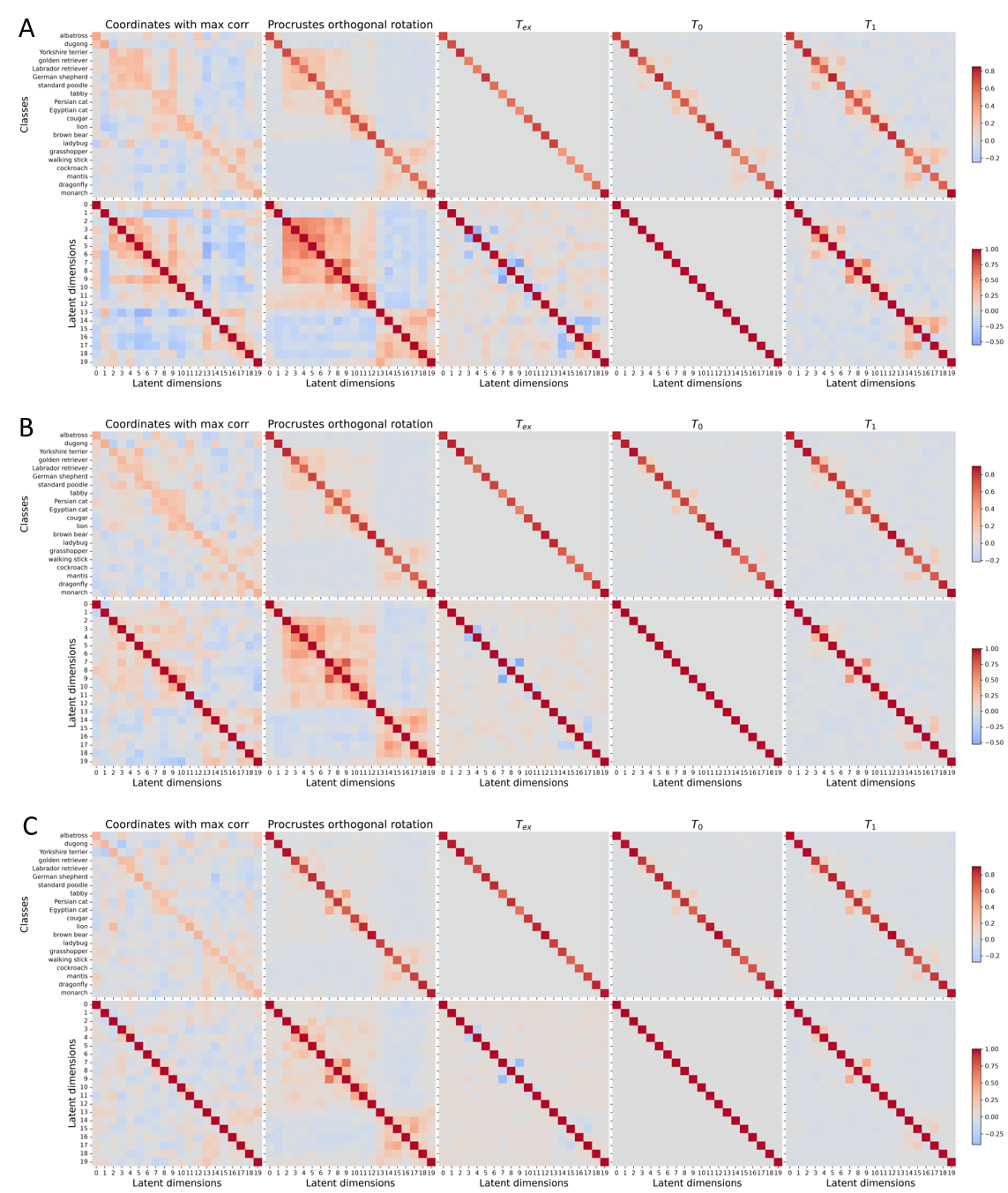}
    % \caption{A. CLIP B. ViT C. DINO}
    \caption{Covariance structure of the pretrained image representations under each transformation, for A.~CLIP, B.~ViT, and C.~DINOv2. Each panel shows latent-covariate (top) and latent-latent (bottom) covariances for the two baselines (dimensions with max correlation, Procrustes rotation) and the proposed $T_{\mathrm{ex}}, T_0, T_1$. }
    \label{fig:sup_emb}
\end{figure*}

\begin{figure*}
    \centering
    \includegraphics[width=0.95\linewidth]{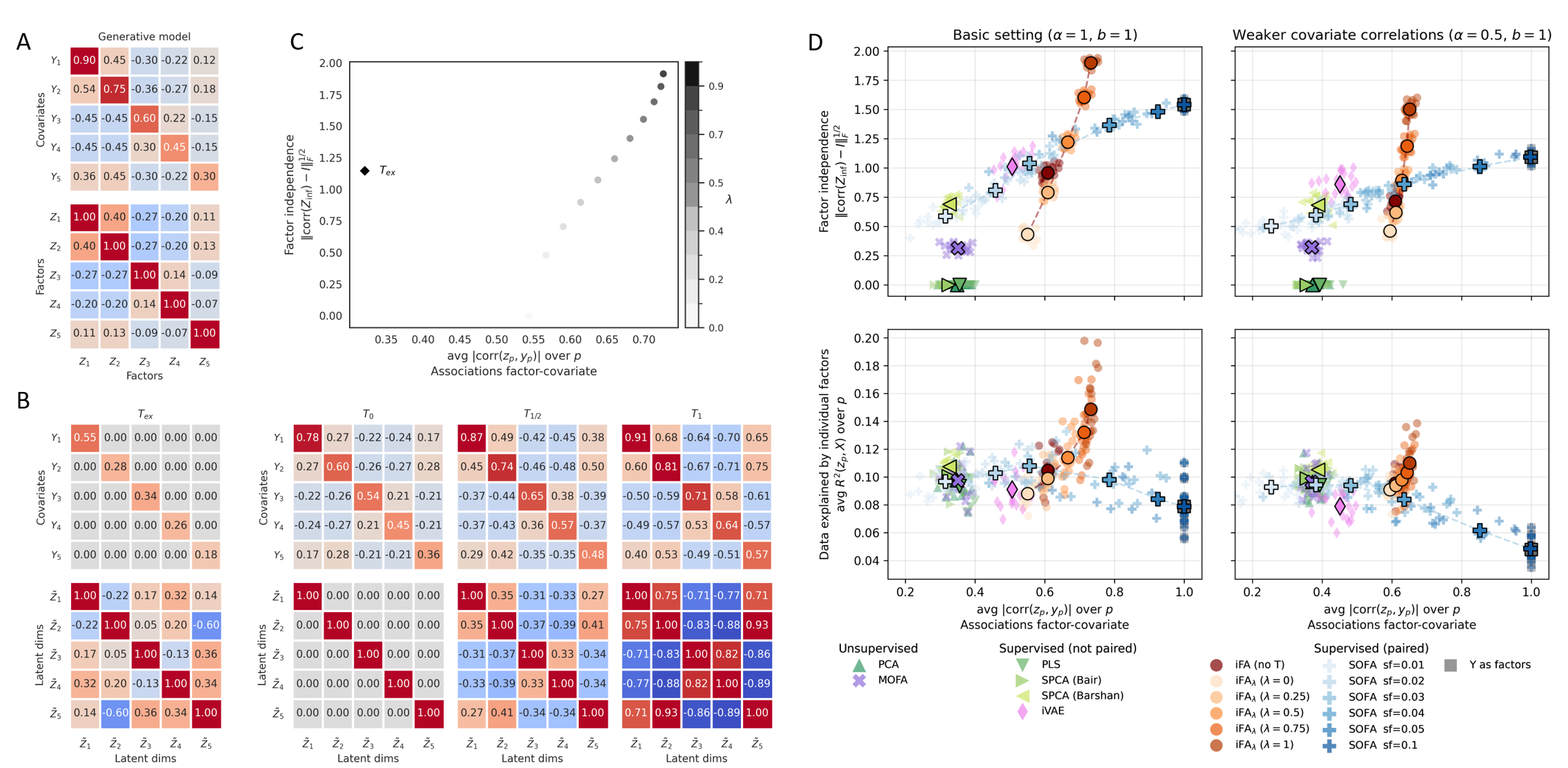}
    \caption{Scenario 1 (PN) results. A. Covariance matrices $\Sigma_{YZ}$ and $\Sigma_{Z}$ in the data-generating setting. B. Covariance matrices $\Sigma_{Y\tilde{Z}}$ and $\Sigma_{\tilde{Z}}$ after transformations $T_{\textrm{ex}}$, $T_0$, $T_{1/2}$, and $T_1$. C. Frontier after transformations $T_\lambda$ and $T_{\textrm{ex}}$. D. Empirical results for methods under two parameter settings: baseline and reduced $\alpha$.}
    \label{fig:sup_sim1_fig1}
\end{figure*}

\begin{figure*}
    \centering
    \includegraphics[width=0.95\linewidth]{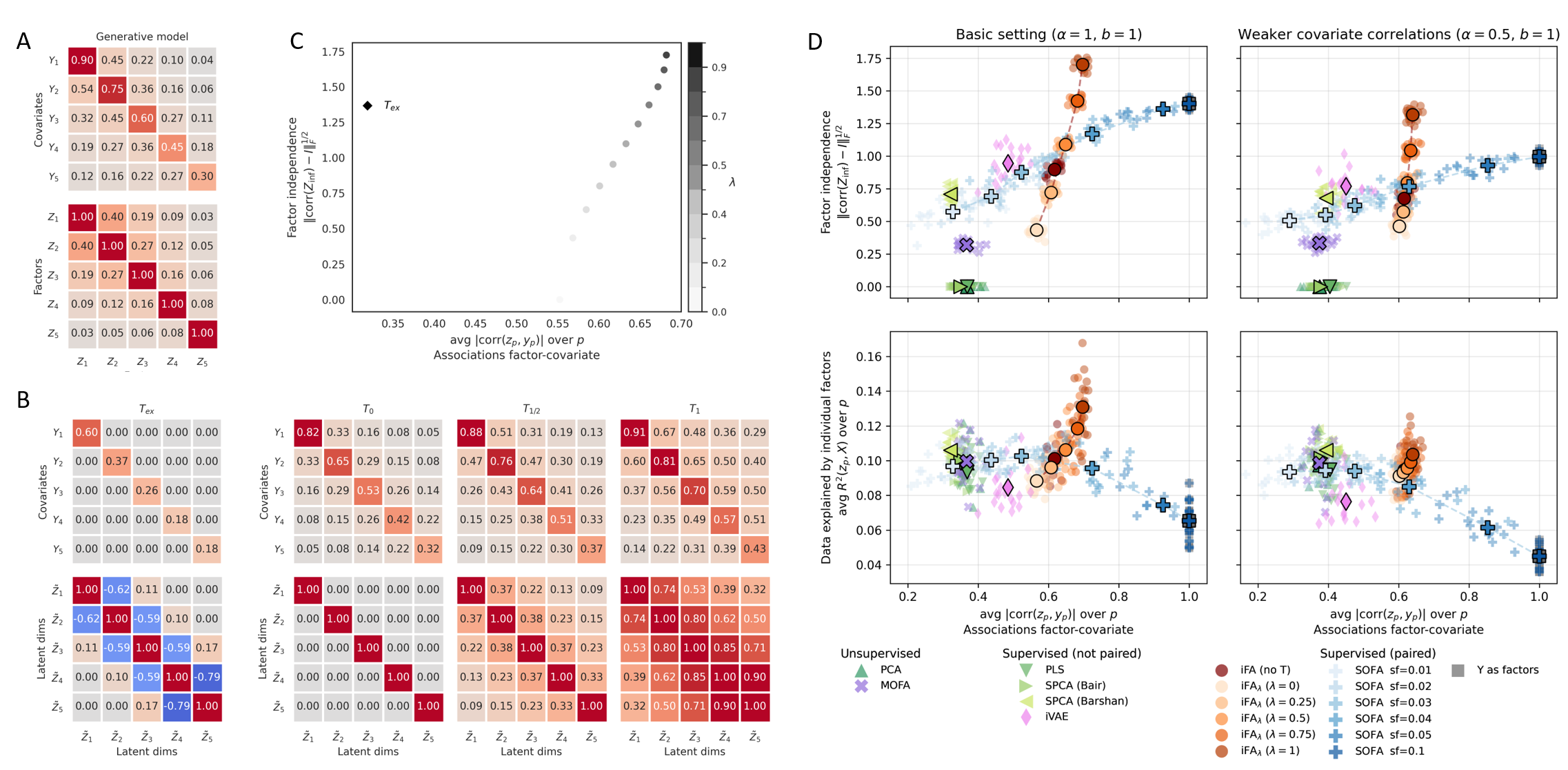}
   \caption{Scenario 2 (AR) results.A. Covariance matrices $\Sigma_{YZ}$ and $\Sigma_{Z}$ in the data-generating setting. B. Covariance matrices $\Sigma_{Y\tilde{Z}}$ and $\Sigma_{\tilde{Z}}$ after transformations $T_{\textrm{ex}}$, $T_0$, $T_{1/2}$, and $T_1$. C. Frontier after transformations $T_\lambda$ and $T_{\textrm{ex}}$. D. Empirical results for methods under two parameter settings: baseline and reduced $\alpha$.}
    \label{fig:sup_sim2_fig1}
\end{figure*}

\begin{figure*}
    \centering
    \includegraphics[width=0.95\linewidth]{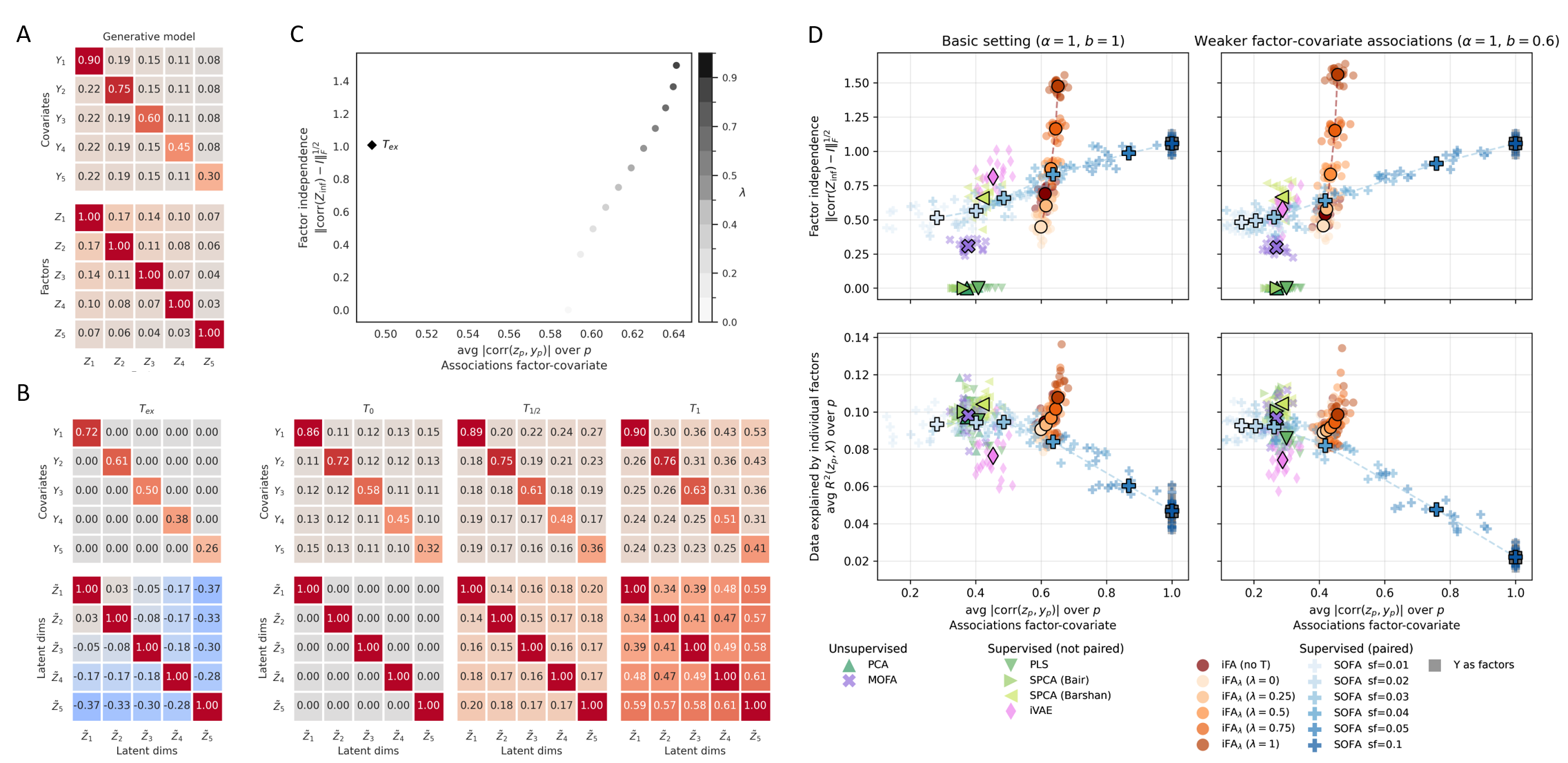}
    \caption{Scenario 3 (P) results. A. Covariance matrices $\Sigma_{YZ}$ and $\Sigma_{Z}$ in the data-generating setting. B. Covariance matrices $\Sigma_{Y\tilde{Z}}$ and $\Sigma_{\tilde{Z}}$ after transformations $T_{\textrm{ex}}$, $T_0$, $T_{1/2}$, and $T_1$. C. Frontier after transformations $T_\lambda$ and $T_{\textrm{ex}}$. D. Empirical results for methods under two parameter settings: baseline and reduced $b$.}
    \label{fig:sup_sim3_fig1}
\end{figure*}

\begin{figure*}
    \centering
    \includegraphics[width=0.9\linewidth]{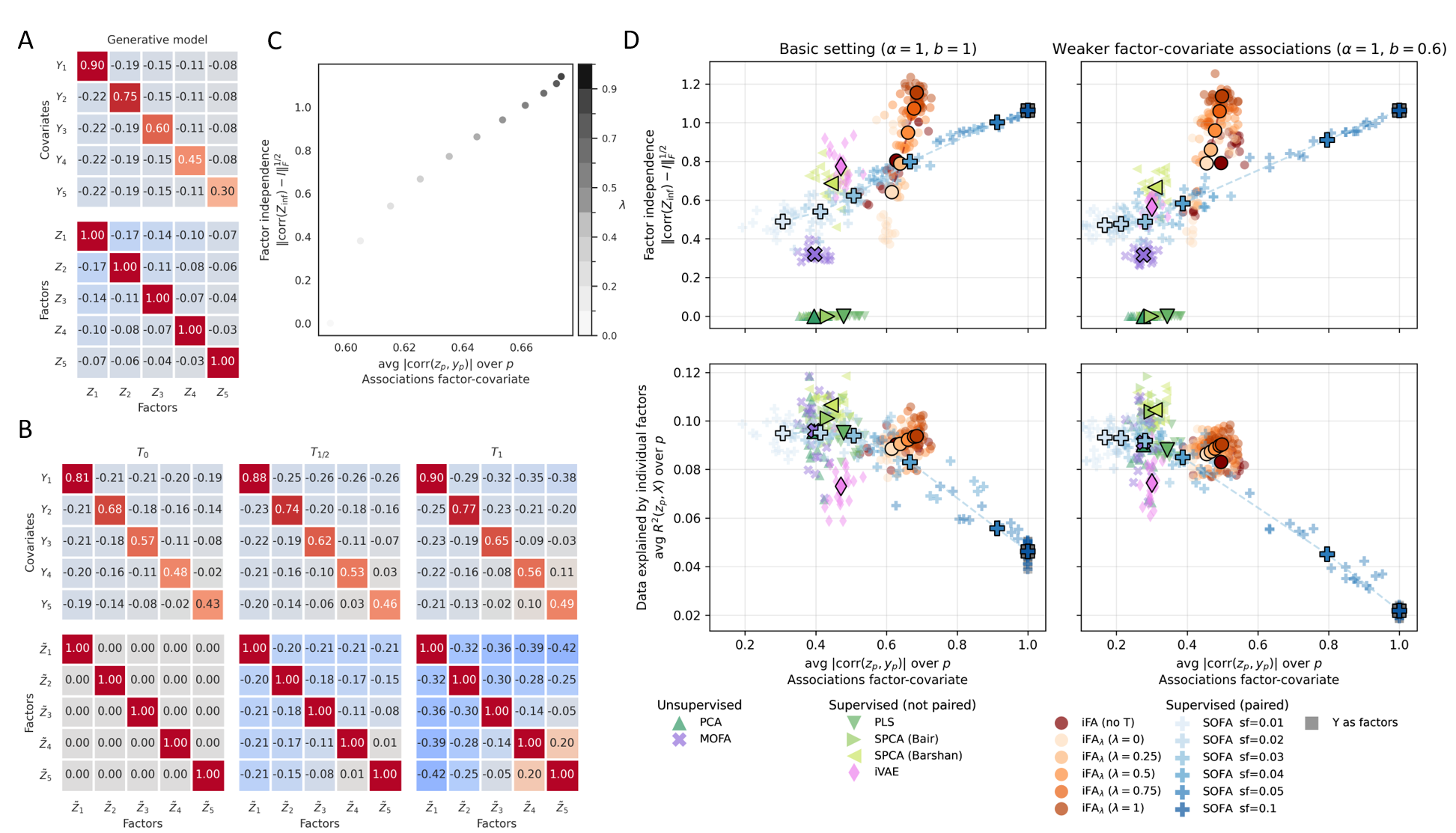}
    \caption{Scenario 4 (N) results. A. Covariance matrices $\Sigma_{YZ}$ and $\Sigma_{Z}$ in the data-generating setting. B. Covariance matrices $\Sigma_{Y\tilde{Z}}$ and $\Sigma_{\tilde{Z}}$ after transformations $T_0$, $T_{1/2}$, and $T_1$ (no $T_{\textrm{ex}}$ is reported, as in Scenario 4 (N) $\Sigma_{ZY}$  is not invertible). C. Frontier of transformations $T_\lambda$. D. Empirical results for methods under two parameter settings: baseline and reduced $b$.}
    \label{fig:sup_sim4_fig1}
\end{figure*}

\begin{figure*}
    \centering
    \includegraphics[width=0.9\linewidth]{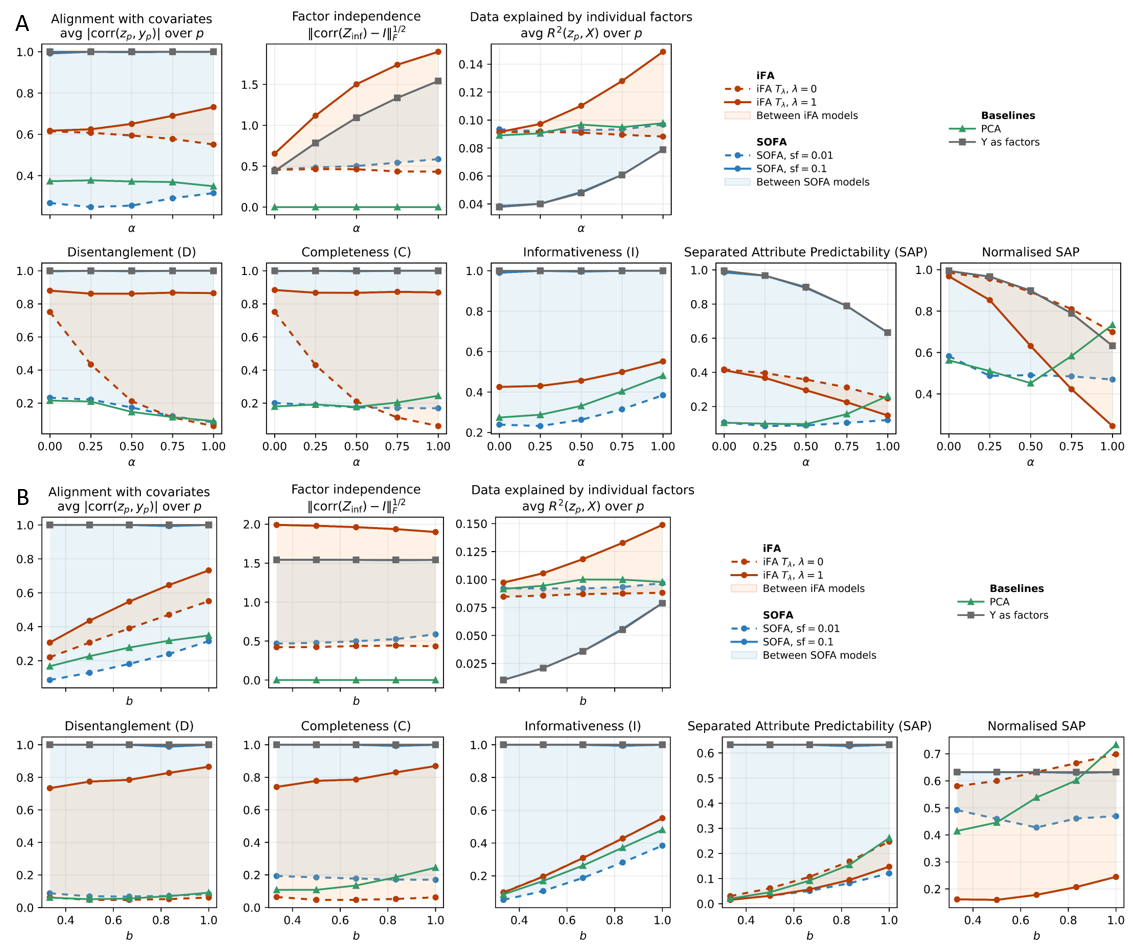}
    \caption{Performance in Scenario 1 (PN), measured by three proposed metrics (average factor–covariate correlations, factor independence measured by the squared Frobenius distance of the correlation matrix of $Z$ from the identity, and average variance explained per factor) and five disentanglement metrics (disentanglement, completeness, informativeness, SAP, and normalised SAP). A. Varying the strength of correlations among covariates. B. Varying the strength of factor–covariate associations.}
    \label{fig:sup_sim1_fig2}
\end{figure*}

\begin{figure*}
    \centering
    \includegraphics[width=0.9\linewidth]{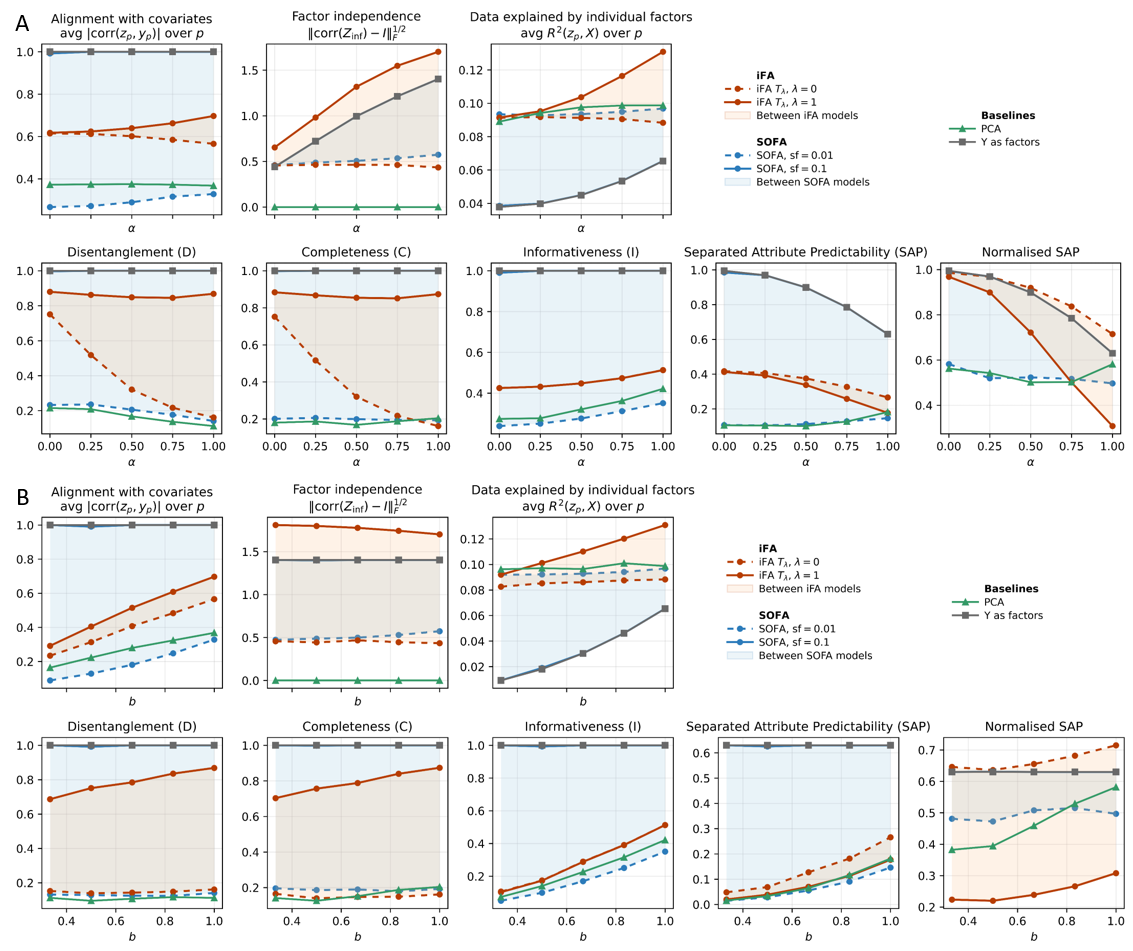}
    \caption{Performance in Scenario 2 (AR), measured by three proposed metrics (average factor–covariate correlations, factor independence measured by the squared Frobenius distance of the correlation matrix of $Z$ from the identity, and average variance explained per factor) and five disentanglement metrics (disentanglement, completeness, informativeness, SAP, and normalised SAP). A. Varying the strength of correlations among covariates. B. Varying the strength of factor–covariate associations}
    \label{fig:sup_sim2_fig2}
\end{figure*}

\begin{figure*}
    \centering
    \includegraphics[width=0.9\linewidth]{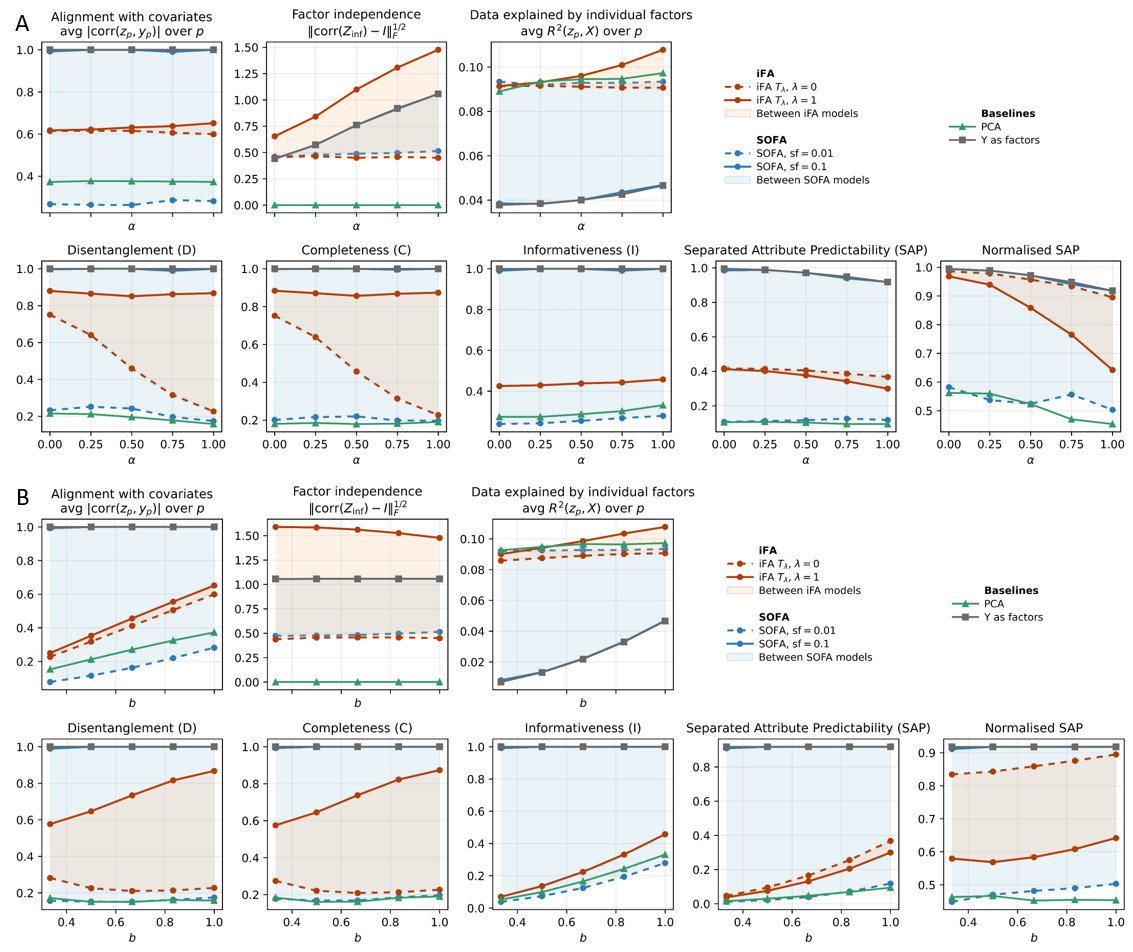}
    \caption{Performance in Scenario 3 (P), measured by three proposed metrics (average factor–covariate correlations, factor independence measured by the squared Frobenius distance of the correlation matrix of $Z$ from the identity, and average variance explained per factor) and five disentanglement metrics (disentanglement, completeness, informativeness, SAP, and normalised SAP). A. Varying the strength of correlations among covariates. B. Varying the strength of factor–covariate associations}
    \label{fig:sup_sim3_fig2}
\end{figure*}

\begin{figure*}
    \centering
    \includegraphics[width=0.9\linewidth]{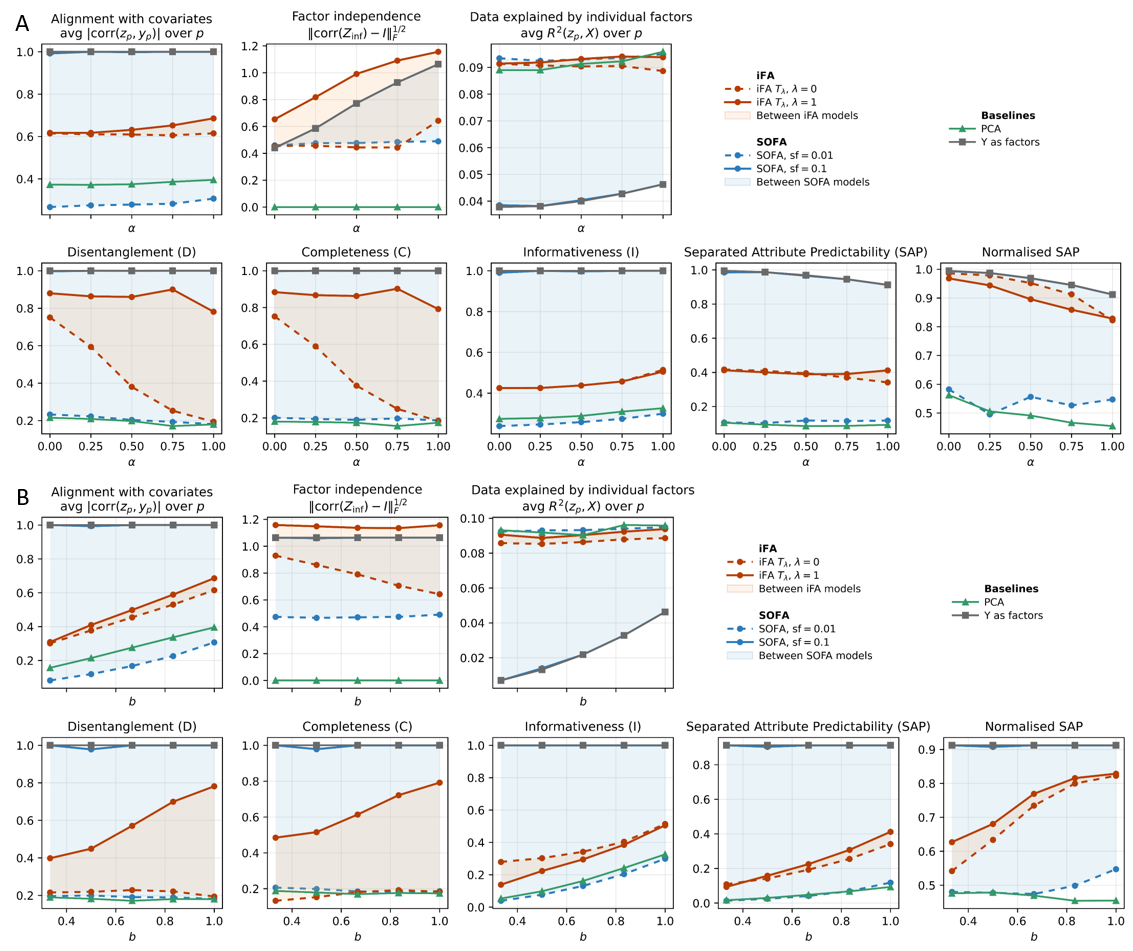}
    \caption{Performance in Scenario 4 (N), measured by three proposed metrics (average factor–covariate correlations, factor independence measured by the squared Frobenius distance of the correlation matrix of $Z$ from the identity, and average variance explained per factor) and five disentanglement metrics (disentanglement, completeness, informativeness, SAP, and normalised SAP). A. Varying the strength of correlations among covariates. B. Varying the strength of factor–covariate associations}
    \label{fig:sup_sim4_fig2}
\end{figure*}

\begin{figure*}
    \centering
    \includegraphics[width=0.95\linewidth]{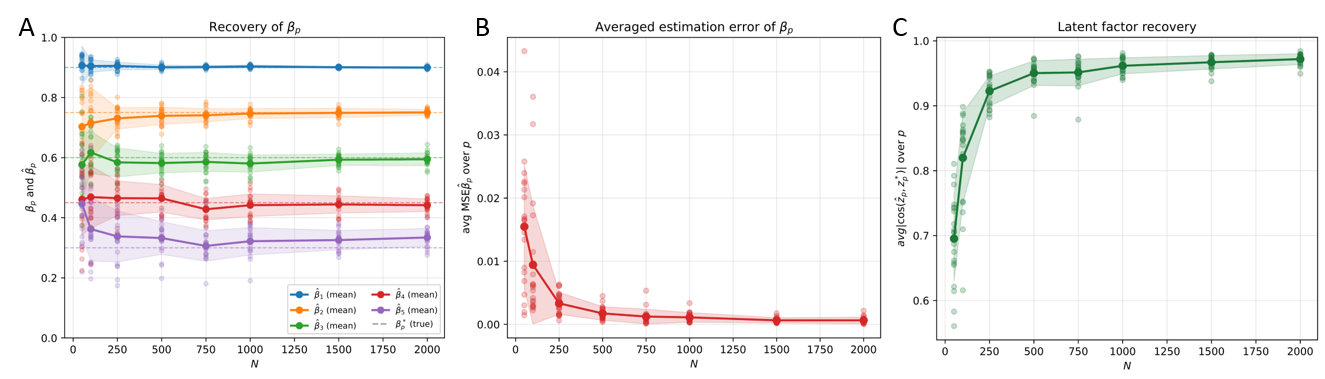}
    \caption{Consistency of iFA (no $T$) as the number of observations~$N$ grows, Scenario~1 (PN). A. Recovery of factor-covariate coefficients. B. Mean squared error of the estimated $\hat\beta_p$ averaged over $p$. C. cosine similarity between recovered and true informed factors. Shaded bands show $\pm1$ s.d. over repetitions.}
    \label{fig:sup_consistency}
\end{figure*}

\begin{figure*}
    \centering
    \includegraphics[width=0.5\linewidth]{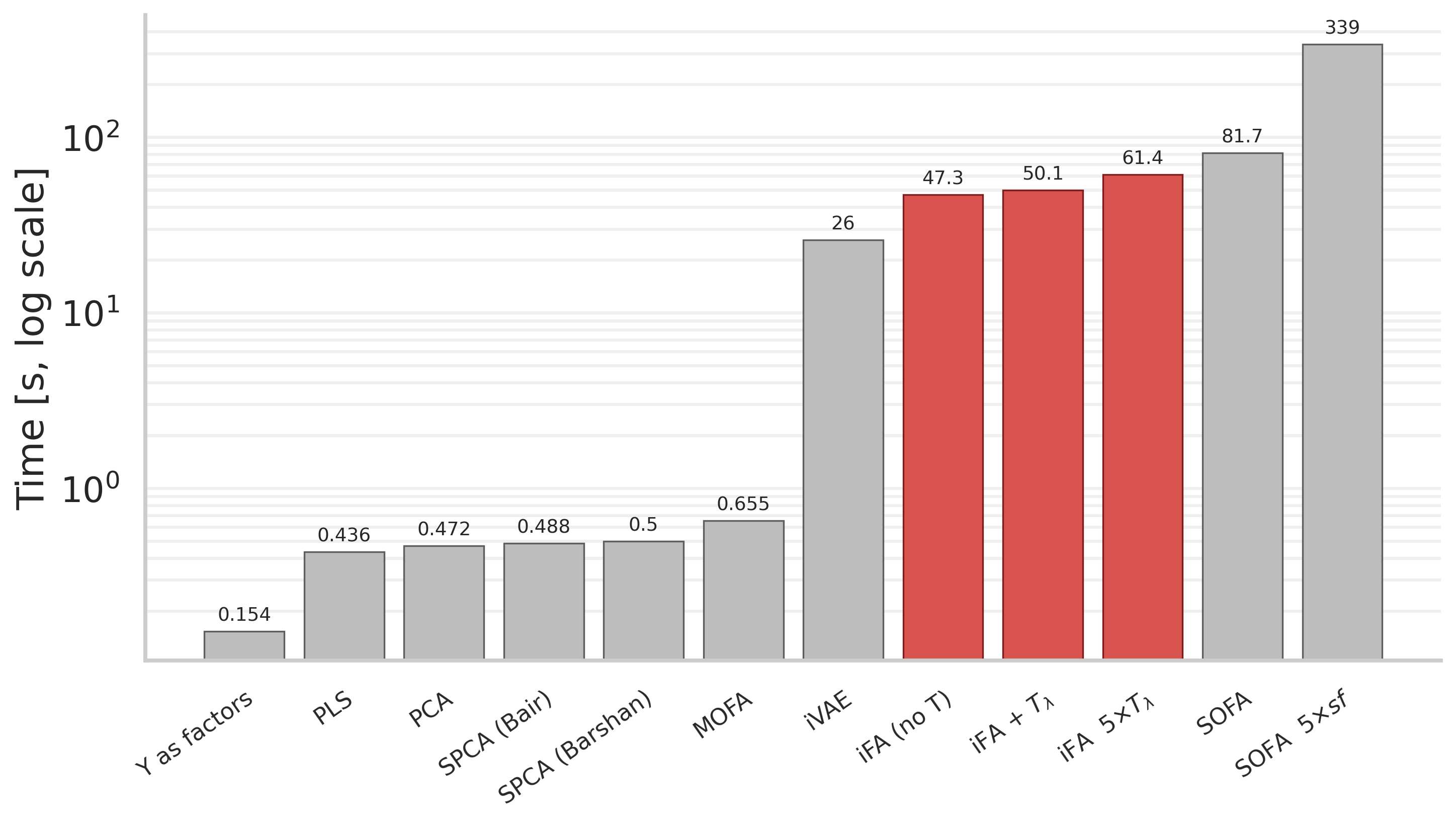}
    \caption{Runtime of all methods on the one simulation setup.}
    \label{fig:sup_time}
\end{figure*}

\end{document}